\title[Delay and Cooperation in Nonstochastic Bandits]{Delay and Cooperation in Nonstochastic Bandits}
\newcommand{\qed}{\hfill $\Box$}
\newcommand{\field}[1]{\mathbb{#1}}
\newcommand{\E}{\field{E}}
\renewcommand{\Pr}{\field{P}}
\newcommand{\Ind}[1]{\field{I}{\{#1\}}}
\newcommand{\dt}{\displaystyle}
\newcommand{\scP}{\mathcal{P}}
\newcommand{\bp}{\boldsymbol{p}}
\newcommand{\loss}{\ell}
\newcommand{\hloss}{\widehat{\loss}}
\newcommand{\bloss}{\boldsymbol{\loss}}
\newlength{\minipagewidth}
\newcommand{\bookbox}[1]{
\par\medskip\noindent
\framebox[\textwidth]{
\begin{minipage}{\minipagewidth}
{#1}
\end{minipage} } \par\medskip }
\newcommand{\tprob}{\widetilde{p}}
\newcommand{\TProb}{\widetilde{P}}
\newcommand{\avdv}{\widehat{d}}
\newcommand{\dist}{\mathrm{dist}}
\begin{document}

\maketitle
\begin{abstract}
We study networks of communicating learning agents that cooperate to solve a common nonstochastic bandit problem. Agents use an underlying communication network to get messages about actions selected by other agents, and drop messages that took more than $d$ hops to arrive, where $d$ is a delay parameter. We introduce \textsc{Exp3-Coop}, a cooperative version of the {\sc Exp3} algorithm and prove that with $K$ actions and $N$ agents the average per-agent regret after $T$ rounds is at most of order $\sqrt{\bigl(d+1 + \tfrac{K}{N}\alpha_{\le d}\bigr)(T\ln K)}$, where $\alpha_{\le d}$ is the independence number of the $d$-th power of the communication graph $G$.
%
We then show that for any connected graph, for $d=\sqrt{K}$  the regret bound is $K^{1/4}\sqrt{T}$, strictly better than the minimax regret $\sqrt{KT}$ for noncooperating agents.
More informed choices of $d$ lead to bounds which are arbitrarily close to the full information minimax regret $\sqrt{T\ln K}$ when $G$ is dense. When $G$ has sparse components, we show that a variant of \textsc{Exp3-Coop}, allowing agents to choose their parameters according to their centrality in $G$, strictly improves the regret. Finally, as a by-product of our analysis, we provide the first characterization of the minimax regret for bandit learning with delay.
\end{abstract}


\section{Introduction}
\label{s:intro}
%
Delayed feedback naturally arises in many sequential decision problems. For instance, a recommender system typically learns the utility of a recommendation by detecting the occurrence of certain events (e.g., a user conversion), which may happen with a variable delay after the recommendation was issued. Other examples are the communication delays experienced by interacting learning agents. Concretely, consider a network of geographically distributed ad servers using real-time bidding to sell their inventory. Each server sequentially learns how to set the auction parameters (e.g., reserve price) in order to maximize the network's overall revenue, and shares feedback information with other servers in order to speed up learning. However, the rate at which information is exchanged through the communication network is slower than the typical rate at which ads are served. This causes each learner to acquire feedback information from other servers with a delay that depends on the network's structure.



Motivated by the ad network example, we consider networks of learning agents that cooperate to solve the same nonstochastic bandit problem, and study the impact of delay on the global performance of these agents.
We introduce the {\sc Exp3-Coop} algorithm, a distributed and cooperative version of the {\sc Exp3} algorithm of \cite{auer2002nonstochastic}.
{\sc Exp3-Coop} works within a distributed and synced model where each agent runs an instance of the same bandit algorithm ({\sc Exp3}). All bandit instances are initialized in the same way irrespective to the agent's location in the network (that is, agents have no preliminary knowledge of the network), and we assume the information about an agent's actions is propagated through the network with a unit delay for each crossed edge. In each round $t$, each agent selects an action and incurs the corresponding loss (which is the same for all agents that pick that action in round $t$). Besides observing the loss of the selected action, each agent obtains the information previously broadcast by other agents with a delay equal to the shortest-path distance between the agents. Namely, at time $t$ an agent learns what the agents at shortest-path distance $s$ did at time $t-s$ for each $s = 1, \ldots, d$, where $d$ is a delay parameter. In this scenario, we aim at controlling the growth of the regret averaged over all agents (the so-called average welfare regret).

In the noncooperative case, when agents ignore the information received from other agents, the average welfare regret grows like $\sqrt{KT}$ (the minimax rate for standard bandit setting), where $K$ is the number of actions and $T$ is the time horizon. We show that, using cooperation, $N$ agents with communication graph $G$ can achieve
an average welfare regret of order $\sqrt{\bigl(d+1 + \tfrac{K}{N}\alpha_{\leq d}\bigr)(T\ln K)}$. Here $\alpha_{\leq d}$ denotes the independence number of the $d$-th power of $G$ (i.e., the graph $G$ augmented with all edges between any two pair of nodes at shortest-path distance less than or equal to $d$). When $d = \sqrt{K}$ this bound is at most $K^{1/4}\sqrt{T\ln K} + \sqrt{K}(\ln T)$
for any connected graph
---see Remark~\ref{rem:choiceofd} in Section~\ref{s:single-d}--- which is asymptotically better than $\sqrt{KT}$.

Networks of nonstochastic bandits were also investigated by~\cite{awerbuch2008competitive} in a setting where the distribution over actions is shared among the agents without delay. \cite{awerbuch2008competitive} prove a bound on the average welfare regret of order $\sqrt{\bigl(1 + \tfrac{K}{N}\bigr)T}$ ignoring polylog factors.\footnote{
The rate proven in~\citep[Theorem~2.1]{awerbuch2008competitive} has a worse dependence on $T$, but we believe this is due to the fact that their setting allows for dishonest agents and agent-specific loss vectors.
}
We recover the same bound as a special case of our bound when $G$ is a clique and $d=1$. In the clique case our bound is also similar to the bound $\sqrt{\tfrac{K}{N}(T\ln K)}$ achieved by~\cite{seldin2014prediction} in a single-agent bandit setting where, at each time step, the agent can choose a subset of $N \le K$ actions and observe their loss.
In the case when $N=1$ (single agent), our analysis can be applied to the nonstochastic bandit problem where the player observes the loss of each played action with a delay of $d$ steps.
In this case we improve on the previous result of $\sqrt{(d+1)KT}$ by~\cite{neu2010online,neu2014online},
and give the first characterization (up to logarithmic factors) of the minimax regret, which is of order $\sqrt{(d + K)\,T}$.

In principle, the problem of delays in online learning could be tackled by simple reductions. Yet, these reductions give rise to suboptimal results. In the single agent setting, where the delay is constant and equal to $d$, one can use the technique of~\cite{weinberger2002delayed} and run $d+1$ instances of an online algorithm for the nondelayed case, where each instance is used every $d+1$ steps. This delivers a suboptimal regret bound of $\sqrt{(d+1)KT}$. In the case of multiple delays, like in our multi-agent setting, one can repeat the same action for $d+1$ steps while accumulating information from the other agents, and then perform an update on scaled-up losses. The resulting (suboptimal) bound on the average welfare regret would be of the form $\sqrt{(d+1)\bigl(1 + \tfrac{K}{N}\alpha_{\leq d}\bigr)(T\ln K)}$.

Rather than using reductions, the analysis of {\sc Exp3-Coop} rests on quantifying the performance of
suitable importance weighted estimates. In fact, in the single-agent setting with delay parameter $d$, using {\sc Exp3-Coop} reduces to
running the standard {\sc Exp3} algorithm performing an update as soon a new loss becomes available. This implies that at any round $t > d$, {\sc Exp3} selects an action without knowing the losses incurred during the last $d$ rounds. The resulting regret is bounded by relating the standard analysis of {\sc Exp3} to a detailed quantification of the extent to which the distribution maintained by {\sc Exp3} can drift in $d$ steps.

In the multi-agent case, the importance weighted estimate of \textsc{Exp3-Coop} is designed in such a way that at each time $t > d$ the instance of the algorithm run by an agent $v$ updates all actions that were played at time $t-d$ by agent $v$ or by other agents not further away than $d$ from $v$. Compared to the single agent case, here each agent can exploit the information circulated by the other agents. However, in order to 
compute the importance weighted estimates used locally by each agent, the probabilities maintained by the agents must be propagated together with the observed losses. Here, further concerns may show up, like the amount of communication, and the location of each agent within the network. In particular, when $G$ has sparse components, we show that a variant of \textsc{Exp3-Coop}, allowing agents to choose their parameters according to their centrality within $G$, strictly improves on the regret of \textsc{Exp3-Coop}.

\section{Additional Related Work}
\label{s:related}
Many important ideas in delayed online learning, including the observation that the effect of delays can be limited by controlling the amount of change in the agent strategy, were introduced by~\citet{mesterharm2005line} ---see also \cite[Chapter~8]{Mester2007}. A more recent investigation on delayed online learning is due to~\citet{neu2010online,neu2014online}, who analyzed exponential weights with delayed feedbacks. Furher progress is made by~\citet{joulani2013online}, who also study delays in the general partial monitoring setting. Additional works~\citep{DBLP:conf/aaai/JoulaniGS16,NIPS2015_5833} prove regret bounds for the full-information case of the form $\sqrt{(D+T)\ln K}$, where $D$ is the total delay experienced over the $T$ rounds. In the stochastic case, bandit learning with delayed feedback was considered by \citet{DBLP:conf/uai/DudikHKKLRZ11,joulani2013online}.

To the best of our knowledge, the first paper about nonstochastic cooperative bandit networks is \citep{awerbuch2008competitive}. More papers analyze the stochastic setting, and the closest one to our work is perhaps~\citep{szorenyi2013gossip}. In that paper, delayed loss estimates in a network of cooperating stochastic bandits are analyzed using a dynamic P2P random networks as communication model. A more recent paper is~\citep{landgren2015distributed}, where the communication network is a fixed graph and a cooperative version of the UCB algorithm is introduced which uses a distributed consensus algorithm to estimate the mean rewards of the arms. The main result is an individual (per-agent) regret bound that depends on the network structure without taking delays into account.
Another interesting paper about cooperating bandits in a stochastic setting is~\citep{kar2011bandit}. Similar to our model, agents sit on the nodes of a communication network. However, only one designated agent observes the rewards of actions he selects, whereas the others remain in the dark. This designated agent broadcasts his sampled actions through the networks to the other agents, who must learn their policies relying only on this indirect feedback. The paper shows that in any connected network this information is sufficient to achieve asymptotically optimal regret.
Cooperative bandits with asymmetric feedback are also studied by \cite{barrett2011ad}. In their model, an agent must teach the reward distribution to another agent while keeping the discounted regret under control. \cite{TekinS15} investigate a stochastic contextual bandit model where each agent can either privately select an action or have another agent select an action on his behalf. In a related paper, \cite{TekinZS14} look at a stochastic bandit model with combinatorial actions in a distributed recommender system setting, and study incentives among agents who can now recommend items taken from other agents' inventories.
Another line of relevant work involves problems of decentralized bandit coordination. For example, \cite{stranders2012dcops} consider a bandit coordination problem where the the reward function is global and can be represented as a factor graph in which each agent controls a subset of the variables.
A parallel thread of research concerns networks of bandits that compete for shared resources. A paradigmatic application domain is that of cognitive radio networks, in which a number of channels are shared among many users and any two or more users interfere whenever they simultaneously try to use the same channel. The resulting bandit problem is one of coordination in a competitive environment, because every time two or more agents select the same action at the same time step they both get a zero reward due to the interference ---see~\citep{RosenskiSS15} for recent work on stochastic competitive bandits and~\citep{kleinberg2009multiplicative} for a study of more general congestion games in a game-theoretic setting.
Finally, there exists an extensive literature on the adaptation of gradient descent and related algorithms to distributed computing settings, where asynchronous processors naturally introduce delays ---see, e.g., \citep{NIPS2009_3888,NIPS2011_4247,li2013distributed,NIPS2014_5242,NIPS2015_5833,liu2015asynchronous,duchi2015asynchronous}. However, none of these works considers bandit settings, which are an essential ingredient for our analysis.

\section{Preliminaries}\label{s:prel}
We now establish our notation, along with basic assumptions and preliminary facts related to our algorithms. Notation and setting here both refer to the single agent case. The cooperative setting with multiple agents (and notation thereof) will be introduced in Section~\ref{s:multi}. Proofs of all the results stated here can be found in~\citep{cesa2016delay}.

Let $A = \{1,\dots,K\}$ be the action set. A learning agent runs an exponentially-weighted algorithm with weights $w_t(i)$, and learning rate $\eta > 0$. Initially, $w_1(i) = 1$ for all $i \in A$. At each time step $t=1,2,\dots$, the agent draws action $I_t$ with probability $\Pr(I_t = i) = p_t(i) = w_t(i)/W_t$, where $W_t = \sum_{j \in A} w_t(j)$. After observing the loss $\loss_t(I_t) \in [0,1]$ associated with the chosen action $I_t$, and possibly some additional information, the agent computes, for each $i \in A$, nonnegative loss estimates $\hloss_t(i)$, and performs the exponential update
\begin{equation}
\label{eq:exp-upd}
    w_{t+1}(i) = p_t(i)\,\exp\bigl(-\eta\,\hloss_t(i)\bigr)
\end{equation}
to these weights. The following two lemmas
are general results that control the evolution of the probability distributions in the exponentially-weighted algorithm. As we said in the introduction, bounding the extent to which the distribution used by our algorithms can drift in $d$ steps is key to controlling regret in a delayed setting. The first result bounds the {\em additive} change in the probability of any action, and it holds no matter how $\hloss_t(i)$ is defined.
\begin{lemma}
\label{l:sandwich}
Under the update rule~(\ref{eq:exp-upd}), for all $t \ge 1$ and for all $i \in A$,
\[
    -\eta\,p_t(i)\hloss_t(i) \le p_{t+1}(i) - p_t(i) \le \eta\,p_{t+1}(i)\sum_{j \in A} p_t(j)\hloss_t(j)
\]
holds deterministically with respect to the agent's randomization.
\end{lemma}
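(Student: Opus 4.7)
The plan is to derive both inequalities from the closed form of $p_{t+1}(i)$ together with the elementary bound $e^{-x}\ge 1-x$. From~\eqref{eq:exp-upd} we have $p_{t+1}(i)=p_t(i)\,e^{-\eta\hloss_t(i)}/W_{t+1}$ where $W_{t+1}=\sum_{j\in A}p_t(j)\,e^{-\eta\hloss_t(j)}$. Because $\hloss_t(j)\ge 0$, each factor $e^{-\eta\hloss_t(j)}\le 1$, so $W_{t+1}\le 1$. The whole argument is purely algebraic, hence it will hold deterministically with respect to the agent's internal randomization, as the statement requires.

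For the lower bound I would chain
\[
p_{t+1}(i)\;=\;\frac{p_t(i)\,e^{-\eta\hloss_t(i)}}{W_{t+1}}\;\ge\;p_t(i)\,e^{-\eta\hloss_t(i)}\;\ge\;p_t(i)\bigl(1-\eta\,\hloss_t(i)\bigr),
\]
where the first inequality uses $W_{t+1}\le 1$ and the second uses $e^{-x}\ge 1-x$. Subtracting $p_t(i)$ from both ends yields $p_{t+1}(i)-p_t(i)\ge -\eta\,p_t(i)\hloss_t(i)$, which is the left half of the claim.

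For the upper bound I would invert the update to get $p_t(i)=p_{t+1}(i)\,W_{t+1}\,e^{\eta\hloss_t(i)}$, and then apply $e^{\eta\hloss_t(i)}\ge 1$ to write
\[
p_{t+1}(i)-p_t(i)\;=\;p_{t+1}(i)\bigl(1-W_{t+1}\,e^{\eta\hloss_t(i)}\bigr)\;\le\;p_{t+1}(i)\bigl(1-W_{t+1}\bigr).
\]
Linearizing inside $W_{t+1}$ via $e^{-x}\ge 1-x$ gives $W_{t+1}\ge 1-\eta\sum_{j\in A}p_t(j)\hloss_t(j)$, hence $1-W_{t+1}\le \eta\sum_{j\in A}p_t(j)\hloss_t(j)$, and substituting completes the right half.

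I do not foresee any real obstacle here; the argument is a few lines of elementary inequalities. The only mildly non-routine point is the asymmetric treatment of the two sides: for the lower bound one bounds $W_{t+1}$ crudely above by $1$ and keeps $p_t(i)$ on the right, whereas for the upper bound one keeps $p_{t+1}(i)$ on the right and bounds $1-W_{t+1}$ from above via the linearization. This asymmetry is exactly what produces $p_t(i)$ on the left side of the displayed inequality and $p_{t+1}(i)$ on the right, a mismatch that will matter when later bounds need to absorb $\hloss_t(i)$ values as large as $1/p_t(i)$.
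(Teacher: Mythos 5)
Your proof is correct and follows essentially the same route as the paper's: both sides rest on $W_{t+1}\le 1$ (equivalently $w_{t+1}(i)\le p_{t+1}(i)$) together with the linearization $e^{-x}\ge 1-x$, applied once to $\hloss_t(i)$ for the lower bound and once inside $W_{t+1}$ for the upper bound. No gaps.
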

The second result delivers a {\em multiplicative} bound on the change in the probability of any action when the loss estimates
$\hloss_t(i)$ are of the following form:
\begin{equation}\label{e:lossestimate}
    \hloss_t(i) = \left\{ \begin{array}{cl}
        \displaystyle{\frac{\loss_{t-d}(i)}{q_{t-d}(i)}}\, B_{t-d}(i) & \text{if $t > d$,}
        \\
        0 & \text{otherwise~,}
    \end{array} \right.
\end{equation}
where $d \geq 0$ is a delay parameter, $B_{t-d}(i) \in \{0,1\}$, for $i \in A$, are indicator functions, and $q_{t-d}(i) \ge p_{t-d}(i)$ for all $i$ and $t > d$. In all later sections, $B_{t-d}(i)$ will be instantiated to the indicator function of the event that action $i$ has been played at time $t-d$ by some agent, and $q_{t-d}(i)$ will be the (conditional) probability of this event.
\begin{lemma}
\label{l:mult}
Let $\hloss_t(i)$ be of the form (\ref{e:lossestimate}) for each $t \ge 1$ and $i \in A$.
%
If $\eta \le \frac{1}{Ke(d+1)}$ in the update rule~(\ref{eq:exp-upd}), then
\[
    p_{t+1}(i) \le \left(1 + \frac{1}{d}\right)p_t(i)
\]
holds for all $t \ge 1$ and $i \in A$, deterministically with respect to the agent's randomization.
\end{lemma}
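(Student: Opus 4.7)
The plan is to prove $p_{t+1}(i) \le \bigl(1 + \tfrac{1}{d}\bigr)p_t(i)$ by induction on $t$. For $t \le d$, the loss estimate $\hloss_t(i)$ is identically zero by the definition in~(\ref{e:lossestimate}), so the update reduces to $w_{t+1}(i) = p_t(i)$ and hence $p_{t+1}(i) = p_t(i)$, trivially satisfying the bound. The nontrivial work lies in the inductive step, where I will assume $p_{s+1}(j) \le \bigl(1 + \tfrac{1}{d}\bigr)p_s(j)$ for all $s < t$ and all $j \in A$.

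For the inductive step, I will start from the exact ratio
\[
\frac{p_{t+1}(i)}{p_t(i)} \;=\; \frac{\exp\bigl(-\eta\,\hloss_t(i)\bigr)}{\sum_{j \in A} p_t(j)\exp\bigl(-\eta\,\hloss_t(j)\bigr)}\,,
\]
upper bound the numerator by $1$ (since $\hloss_t(i)\ge 0$), and lower bound the denominator via $\exp(-x)\ge 1-x$ for $x\ge 0$. This reduces the target inequality to showing $\eta\sum_j p_t(j)\hloss_t(j) \le \tfrac{1}{d+1}$, because $\frac{1}{1-1/(d+1)} = 1+\tfrac{1}{d}$. Using $\loss_{t-d}(j)\in[0,1]$ and the specific form of $\hloss_t$ in~(\ref{e:lossestimate}), this expectation-like quantity is at most $\eta\sum_j \tfrac{p_t(j)}{q_{t-d}(j)}B_{t-d}(j)$.

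The crucial obstacle is controlling $p_t(j)/q_{t-d}(j)$, and this is exactly where the induction pays off. Iterating the inductive hypothesis across the $d$ steps from $t-d$ to $t$ gives
\[
p_t(j) \;\le\; \bigl(1+\tfrac{1}{d}\bigr)^{d}\,p_{t-d}(j) \;\le\; e\,p_{t-d}(j) \;\le\; e\,q_{t-d}(j),
\]
where the last inequality uses the standing assumption $q_{t-d}(j)\ge p_{t-d}(j)$. Since $B_{t-d}(j)\in\{0,1\}$ and $|A|=K$, the inner sum is at most $eK$, so $\eta\sum_j p_t(j)\hloss_t(j) \le eK\eta$. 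The hypothesis $\eta\le \frac{1}{Ke(d+1)}$ then immediately yields the required $\tfrac{1}{d+1}$ bound and closes the induction. The only subtlety I anticipate is checking that the induction is truly self-contained: the $d$-step telescoping only invokes the hypothesis at rounds $t-d,\ldots,t-1$, all strictly less than $t$, so no circularity arises.
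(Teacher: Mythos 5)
Your proposal is correct and follows essentially the same route as the paper: induction on $t$ with a trivial base case for $t \le d$, the iterated multiplicative bound $p_t(j) \le (1+1/d)^d p_{t-d}(j) \le e\,q_{t-d}(j)$ to show $\sum_j p_t(j)\hloss_t(j) \le Ke$, and the condition $\eta \le \frac{1}{Ke(d+1)}$ to convert $\frac{1}{1-\eta Ke}$ into $1+\frac{1}{d}$. The only cosmetic difference is that you derive the drift inequality $p_{t+1}(i)\bigl(1-\eta\sum_j p_t(j)\hloss_t(j)\bigr) \le p_t(i)$ directly from the ratio $p_{t+1}(i)/p_t(i)$ via $e^{-x}\ge 1-x$, whereas the paper obtains the same inequality by invoking its Lemma~\ref{l:sandwich}.
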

As we said in Section~\ref{s:intro}, the idea of controlling the drift of the probabilities in order to bound the effects of delayed feedback is not new. In particular, variants of Lemma~\ref{l:sandwich} were already derived in the work of~\cite{neu2010online,neu2014online}. However, Lemma~\ref{l:mult} appears to be new, and this is the key result to achieving our improvements.

\section{The Cooperative Setting on a Communication Network}\label{s:multi}
In our multi-agent bandit setting, there are $N$ agents sitting on the vertices of a connected and undirected communication graph $G = (V,E)$, with $V = \{1, \ldots, N\}$. The agents cooperate to solve the same instance of a nonstochastic bandit problem while limiting the communication among them. Let $N_s(v)$ be the set of nodes $v' \in V$ whose shortest-path distance $\dist_G(v,v')$ from $v$ in $G$ is exactly $s$.
At each time step $t=1,2,\dots$, each agent $v \in V$ draws an action $I_t(v)$ from the common action set $A$. Note that each action $i \in A$ delivers the same loss $\loss_t(i) \in [0,1]$ to all agents $v$ such that $I_t(v) = i$. At the end of round $t$, each agent $v$ observes his own loss $\loss_t\bigl(I_t(v)\bigr)$, and sends to his neighbors in $G$ the message
\[
    m_t(v) = \Bigl\langle t,v,I_t(v),\loss_t\bigl(I_t(v)\bigr),\bp_t(v) \Bigr\rangle
\]
where $\bp_t(v) = \bigl(p_t(1,v),\dots,p_t(K,v)\bigr)$ is the distribution of $I_t(v)$. Moreover, $v$ also receives from his neighbors a variable number of messages $m_{t-s}(v')$. Each message $m_{t-s}(v')$ that $v$ receives from a neighbor is used to update $\bp_t(v)$ and then forwarded to the other neighbors only if $s < d$,
otherwise it is dropped.\footnote
{
Dropping messages older than $d$ rounds is clearly immaterial with respect to proving bandit regret bounds. We added this feature just to prove a point about the message complexity of the protocol. See Remark~\ref{r:exo} in Section~\ref{s:many-d} for further discussion.
}
Here $d$ is the maximum delay, a parameter of the communication protocol. Therefore, at the end of round $t$, each agent $v$ receives one message $m_{t-s}(v')$ for each agent $v'$ such that $\dist_G(v,v') = s$, where $s\in\{1,\dots,d\}$.
Graph $G$ can thus be seen as a synchronous multi-hop communication network where messages are broadcast, each hop causing a delay of one time step.
Our learning protocol is summarized in Figure~\ref{f:protocol}, while Figure~\ref{f:example} contains a pictorial example.

Our model is
similar to the {\sc local} communication model in distributed computing \citep{Linial92,Suomela13}, where the output of a node depends only on the inputs of other nodes in a constant-size neighborhood of it,
and the goal is to derive algorithms whose running time is independent of the network size.
(The main difference is that the task here has no completion time, however, also in our model influence on a node is only through a constant-size neighborhood of it.)

\begin{figure}[t]
\bookbox{
\textbf{The cooperative bandit protocol}\\
\textbf{Parameters:} Undirected communication graph $G = (V,E)$, hidden loss vectors $\bloss_t = \big(\ell_t(1), \ldots, \ell_t(K)\big) \in [0,1]^K$ for $t \ge 1$, delay $d$.\\
\textbf{For $t=1,2,\dots$}
\begin{enumerate}
\item Each agent $v \in V$ plays action $I_t(v) \in A$ drawn according to distribution
$
    \bp_t(v)
$;
\item\label{st:2}
Each agent $v \in V$ observes loss $\loss_t\bigl(I_t(v)\bigr)$, sends to his neighbors the message $m_t(v)$, and receives from his neighbors messages $m_{t-s}(v')$;
\item\label{st:3}
Each agent $v \in V$ drops any message $m_{t-s}(v')$ received from some neighbor such that $s \ge d$, and forwards to the other neighbors the remaining messages.
\end{enumerate}
}
\caption{
The cooperative bandit protocol where all agents share the same delay parameter $d$.
}
\label{f:protocol}
\end{figure}


One aspect deserving attention is that, apart from the common delay parameter $d$, the agents need not share further information. In particular, the agents need not know neither the topology of the graph $G$ nor the total number of agents $N$. In Section~\ref{s:many-d}, we show that our distributed algorithm can also be analyzed when each agent $v$ uses a personalized delay $d(v)$, thus doing away with the need of a common delay parameter, and guaranteeing a generally better performance.

Further graph notation is needed at this point.
Given $G$ as above, let us denote by $G_{\le d}$ the graph $(V,E_{\le d})$ where $(u,v) \in E_{\le d}$ if and only if the shortest-path distance between agents $u$ and $v$ in $G$ is {\em at most} $d$ (hence $G_{\le 1} = G$).
Graph $G_{\le d}$ is sometimes called the $d$-th power of $G$. We also use $G_0$ to denote the graph $(V,\emptyset)$.
Recall that an independent set of $G$ is any subset
$T \subseteq V$ such that no two $i,j \in T$ are connected by an edge in $E$.
The largest size of an independent set is the {\em independence number} of $G$, denoted by $\alpha(G)$.
Let ${d_G}$ be the {\em diameter} of $G$
(maximal length over all possible shortest paths between all pairs of nodes); then $G_{\le d_G}$ is a clique, and one can easily see that $N = \alpha(G_0) > \alpha(G) \ge \alpha(G_{\le 2}) \ge \cdots \ge \alpha(G_{\le {d_G}}) = 1$. We show in Section~\ref{s:single-d} that the collective performance of our algorithms depends on $\alpha(G_{\le d})$. If the graph $G$ under consideration is directed (see Section~\ref{s:many-d}), then $\alpha(G)$ is the independence number of the undirected graph obtained from $G$ by disregarding edge orientation.

%
The adversary generating losses is oblivious: loss vectors $\bloss_t = \big(\ell_t(1), \ldots, \ell_t(K)\big) \in [0,1]^K$ do not depend on the agents' internal randomization.
The agents' goal is to control the {\em average welfare} regret $R_T^{\mathrm{coop}}$, defined as
\[
    R_T^{\mathrm{coop}} = \left( \frac{1}{N}\sum_{v \in V}\E\left[ \sum_{t=1}^T \loss_t\bigl(I_t(v)\bigr)\right] -  \min_{i \in A} \sum_{t=1}^T \loss_t(i) \right)~,
\]
the expectation being with respect to the internal randomization of each agent's algorithm.
In the sequel, we write $\E_{t}[\cdot]$ to denote the expectation w.r.t.\ the product distribution $\prod_{v \in V} \bp_t(v)$, conditioned on $I_1(v),\dots,I_{t-1}(v)$, $v \in V$.

\begin{figure}[t!]
\begin{picture}(-40,270)(-40,270)
\scalebox{0.65}{\includegraphics{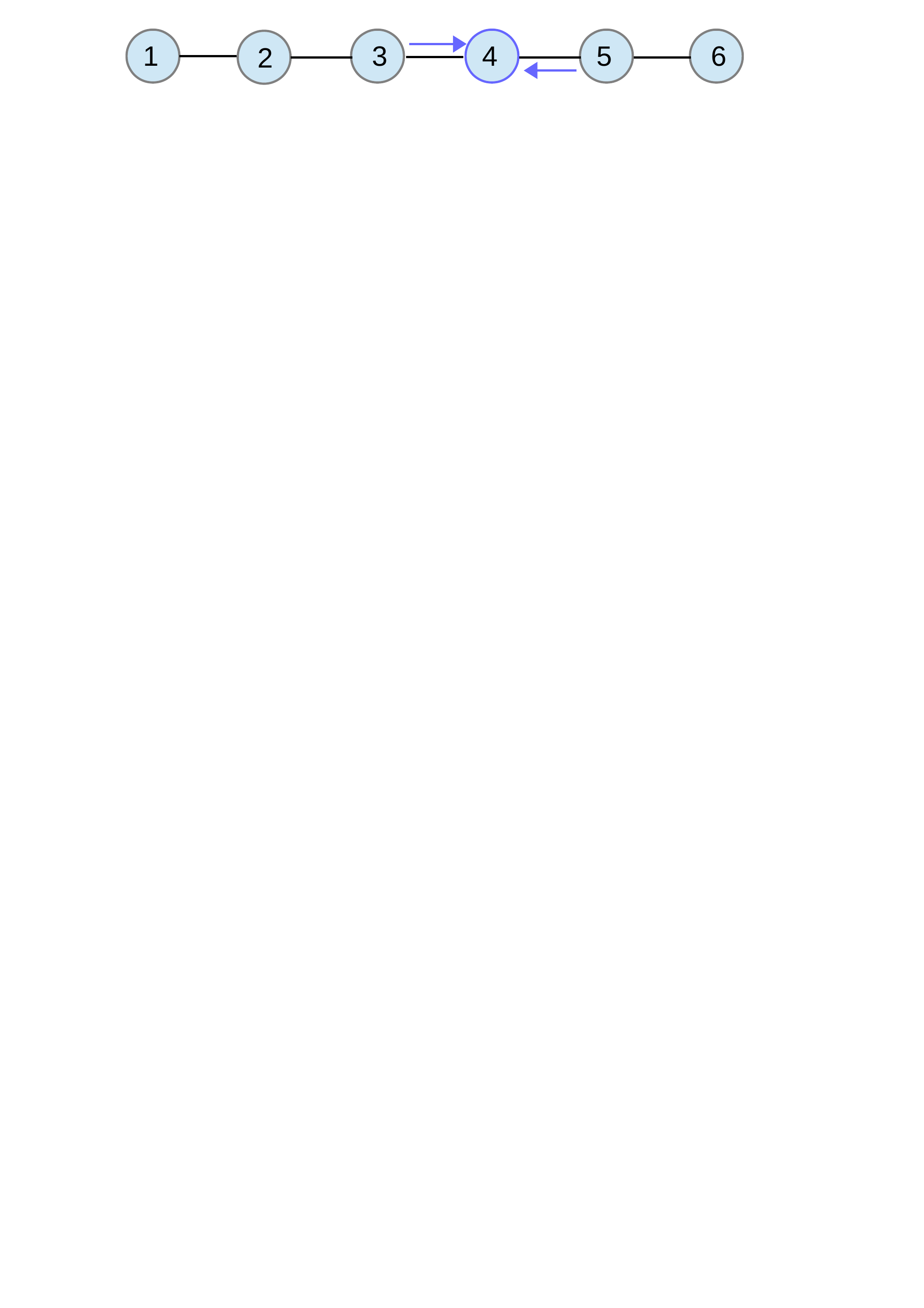}}
\end{picture}
\vspace{-3.45in}
\caption{\label{f:example}
In this example, $G$ is a line graph with $N = 6$ agents, and delay $d = 2$. At the end of time step $t$, agent $4$
sends to his neighbors $3$ and $5$ message $m_t(4)$,
receives from agent $3$ messages $m_{t-1}(3)$, and $m_{t-2}(2)$, and from agent $5$ messages $m_{t-1}(5)$ and $m_{t-2}(6)$. Finally, $4$ forwards to $5$ message $m_{t-1}(3)$ and forwards to $3$ message $m_{t-1}(5)$.
Any message older than $t-1$ received by $4$ at the end of round $t$ will not be forwarded to his neighbors.
}
\end{figure}

\subsection{The Exp3-Coop algorithm}
\label{s:single-d}
Our first algorithm, called {\sc Exp3-Coop} (Cooperative Exp3) is described in Figure \ref{f:exp3-coop}. The algorithm works in the learning protocol of Figure~\ref{f:protocol}. Each agent $v \in V$ runs the exponentially-weighted algorithm~(\ref{eq:exp-upd}), combined with a ``delayed'' importance-weighted loss estimate $\hloss_t(i,v)$ that incorporates the delayed information sent by the other agents. Specifically, denote by
$
    N_{\le d}(v) = \bigcup_{s \le d} N_s(v)
$
the set of nodes in $G$ whose shortest-path distance from $v$ is at most $d$, and note that, for all $v$, $\{v\} = N_{\leq 0}(v) \subseteq N_{\leq 1}(v) \subseteq N_{\leq 2}(v) \subseteq  \cdots $\,. If any of the agents in $N_{\leq d}(v)$ has played at time $t-d$ action $i$ (that is, $B_{d,t-d}(i,v) = 1$ in Eq. in (\ref{eq:estimator})), then the corresponding loss $\ell_{t-d}(i)$ is incorporated by $v$ into $\hloss_t(i,v)$.
The denominator $q_{d,t-d}(i,v)$ is simply,
conditioned on the history,
the probability of $B_{d,t-d}(i,v) = 1$,
i.e., $q_{d,t-d}(i,v)=\E_t[B_{d,t-d}(i,v)]$.
Observe that $\{v\} \subseteq N_{\leq d}(v)$ for all $d \geq 0$ implies $q_{d,t-d}(i,v) \geq p_{t-d}(i,v)$, as required by (\ref{e:lossestimate}). It is also worth mentioning that, despite this is not strictly needed by our learning protocol, each agent $v$ actually exploits the loss information gathered from playing action $I_t(v)$ only $d$ time steps later. A relevant special case of this learning mode is when we only have a single bandit agent receiving delayed feedback (Section~\ref{s:delayed}).
\begin{figure}[t!]
\bookbox{
\textbf{The Exp3-Coop Algorithm}\\
\textbf{Parameters:} Undirected communication graph $G = (V,E)$; delay $d$; learning rate $\eta$.\\
\textbf{Init:} Each agent $v \in V$ sets weights $w_1(i,v) = 1$ for all $i \in A$.\\
\textbf{For $t=1,2,\dots$}
\begin{enumerate}
\item Each agent $v \in V$ plays action $I_t(v) \in A$ drawn according to distribution
$
    \bp_t(v) = (p_t(1,v),\ldots,p_t(K,v))~,
$
where
\[
p_t(i,v) = \frac{w_t(i,v)}{W_t(v)},\, i = 1, \ldots, K, \qquad\text{and}\qquad W_t(v) = \sum_{j \in A} w_t(j,v)~;
\]
\item Each agent $v \in V$ observes loss $\loss_t\bigl(I_t(v)\bigr)$ and exchanges messages with his neighbors (Steps~\ref{st:2} and~\ref{st:3} of the protocol in Figure~\ref{f:protocol});
\item Each agent $v \in V$ performs the update
$
w_{t+1}(i,v) = p_t(i,v)\,\exp\bigl(-\eta\,\hloss_t(i,v)\bigr)
$ for all $i \in A$,
where
\begin{equation}
\label{eq:estimator}
    \hloss_t(i,v) = \left\{ \begin{array}{cl}
        \displaystyle{\frac{\loss_{t-d}(i)}{q_{d,t-d}(i,v)}}\, B_{d,t-d}(i,v) & \text{if $t > d$,}
        \\
        0 & \text{otherwise,}
    \end{array} \right.
\end{equation}
and
${\dt
B_{d,t-d}(i,v) = \Ind{\exists v' \in N_{\le d}(v) \,:\, I_{t-d}(v') = i}
}$
with
\[
    q_{d,t-d}(i,v) = 1 - \prod_{v' \in N_{\le d}(v)}\bigl(1-p_{t-d}(i,v')\bigr)~.
\]
\end{enumerate}
}
\caption{
The Exp3-Coop algorithm where all agents share the same delay parameter $d$.
}
\label{f:exp3-coop}
\end{figure}

By their very definition, the loss estimates $\hloss_t(\cdot,\cdot)$ at time $t$ are determined by the realizations of $I_s(\cdot)$, for $s=1,\dots,t-d$. This implies that the numbers $p_t(\cdot,\cdot)$ defining $q_{d,t-d}(\cdot,\cdot)$, are determined by the realizations of $I_s(\cdot)$ for $s=1,\dots,t-d-1$ (because the probabilities $\bp_t(v)$ at time $t$ are determined by the loss estimates up to time $t-1$, see~(\ref{eq:exp-upd})). We have, for all $t > d$, $i \in A$, and $v \in V$,
\begin{equation}
\label{eq:avevar}
    \E_{t-d}\Bigl[\hloss_t(i,v)\Bigr] = \loss_{t-d}(i)~.
\end{equation}
%
\sloppypar
{
Further, because of what we just said about $p_t(\cdot,\cdot)$ and $q_{d,t-d}(\cdot,\cdot)$ being determined by $I_1(\cdot),\dots,I_{t-d-1}(\cdot)$, we also have
}
\begin{equation}
\label{eq:aveprob}
    \E_{t-d}\Bigl[p_t(i,v)\hloss_t(i,v)\Bigr] = p_t(i,v)\loss_{t-d}(i)~,
\quad
    \E_{t-d}\Bigl[p_t(i,v)\hloss_t(i,v)^2\Bigr] = p_t(i,v)\frac{\loss_{t-d}(i)^2}{q_{d,t-d}(i,v)}~.
\end{equation}
The following theorem quantifies the behavior of {\sc Exp3-Coop} in terms of a free parameter $\gamma$ in the learning rate, the tuning of which will be addressed in the subsequent Theorem~\ref{th:main}.
%
\begin{theorem}
\label{th:nontuned}
The regret of {\sc Exp3-Coop} run over a network $G = (V,E)$ of $N$ agents, each using delay $d$ and learning rate $\eta = \gamma\big/\bigl(Ke(d+1)\bigr)$, for $\gamma \in (0,1]$, satisfies
\[
    R_T^{\mathrm{coop}} \le 2d + \frac{Ke(d+1)\ln K}{\gamma} + \gamma\left(\frac{\alpha(G_{\le d})}{2(1-e^{-1})(d+1)N} + \frac{3}{Ke}\right)T~.
\]
\end{theorem}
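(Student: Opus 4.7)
The plan is to apply the standard Exp3 regret inequality to each agent $v$ in isolation, then use the additive drift control of Lemma~\ref{l:sandwich} together with the multiplicative control of Lemma~\ref{l:mult} to compensate for the $d$-step delay, and finally exploit the structure of $G_{\le d}$ to bound the variance of the importance-weighted loss estimators. Concretely, the standard potential-function argument applied to~(\ref{eq:exp-upd}) (using $e^{-x}\le 1-x+\tfrac{x^2}{2}$ for $x\ge 0$) yields, for every $v$ and every fixed $i^\ast\in A$,
\[
\sum_{t=1}^T\sum_i p_t(i,v)\,\hloss_t(i,v)\;-\;\sum_{t=1}^T \hloss_t(i^\ast,v)\;\le\;\frac{\ln K}{\eta}+\frac{\eta}{2}\sum_{t=1}^T\sum_i p_t(i,v)\,\hloss_t(i,v)^2.
\]
Taking expectations, invoking the unbiasedness identities~(\ref{eq:avevar}) and~(\ref{eq:aveprob}) together with the fact that $\hloss_t\equiv 0$ for $t\le d$, and re-indexing via $s=t-d$, the left-hand side becomes $\E\bigl[\sum_{s=1}^{T-d}\sum_i p_{s+d}(i,v)\ell_s(i)\bigr]-\sum_{s=1}^{T-d}\ell_s(i^\ast)$, and the right-hand side becomes a bound in terms of $\E\bigl[\sum_{s,i}p_{s+d}(i,v)\ell_s(i)^2/q_{d,s}(i,v)\bigr]$.

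To transfer this bound into one on the per-agent regret, which involves $p_s(i,v)$ rather than $p_{s+d}(i,v)$, I would telescope the additive inequality from Lemma~\ref{l:sandwich}, namely $p_r(i,v)-p_{r+1}(i,v)\le\eta\,p_r(i,v)\hloss_r(i,v)$, over $d$ consecutive rounds to obtain $p_s(i,v)-p_{s+d}(i,v)\le\eta\sum_{r=s}^{s+d-1}p_r(i,v)\hloss_r(i,v)$. Multiplying by $\ell_s(i)$, summing over $s$ and $i$, and swapping summation order then bounds the ``drift cost'' $\sum_{s,i}(p_s(i,v)-p_{s+d}(i,v))\ell_s(i)$ by $\eta d T$ in expectation. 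Combining this with the boundary corrections (each of the first $d$ played losses and of the last $d$ comparator losses contributes at most $1$) and averaging the per-agent bound over $v\in V$ yields
\[
R_T^{\mathrm{coop}}\;\le\;2d+\eta d T+\frac{\ln K}{\eta}+\frac{\eta}{2N}\sum_v\sum_{s=1}^{T-d}\sum_i\E\!\left[\frac{p_{s+d}(i,v)\,\ell_s(i)^2}{q_{d,s}(i,v)}\right].
\]

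The crux is to bound the second-order term by something proportional to $\alpha(G_{\le d})/N$. Since $\gamma\le 1$ guarantees $\eta\le 1/(Ke(d+1))$, Lemma~\ref{l:mult} applies and gives $p_{s+d}(i,v)\le(1+1/d)^d p_s(i,v)\le e\,p_s(i,v)$. For the denominator, the elementary inequalities $1-\prod_u(1-x_u)\ge 1-e^{-\sum_u x_u}$ and $1-e^{-y}\ge(1-e^{-1})\min(1,y)$ yield $q_{d,s}(i,v)\ge(1-e^{-1})\min(1,S_v)$, with $S_v=\sum_{u\in N_{\le d}(v)}p_s(i,u)$. I would then split the sum $\sum_v p_{s+d}(i,v)/q_{d,s}(i,v)$ into ``light'' vertices where $S_v\le 1$ and ``heavy'' vertices where $S_v>1$. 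On the light side, $q_{d,s}(i,v)\ge(1-e^{-1})S_v$, and the scale-invariant independence-number inequality
\[
\sum_{v\in V}\frac{p_s(i,v)}{\sum_{u\in N_{\le d}(v)}p_s(i,u)}\;\le\;\alpha(G_{\le d})
\]
(where $N_{\le d}(v)$ is the closed neighborhood of $v$ in $G_{\le d}$) produces, after summing over $i,s$ and using $\ell_s(i)^2\le 1$, the $\alpha(G_{\le d})/N$ contribution. On the heavy side, $q_{d,s}(i,v)\ge 1-e^{-1}$, and the residual is controlled by $\sum_i\sum_v p_s(i,v)=N$. Substituting $\eta=\gamma/(Ke(d+1))$ then produces the $\frac{Ke(d+1)\ln K}{\gamma}$ first-order term, the $\frac{\gamma\,\alpha(G_{\le d})T}{2(1-e^{-1})(d+1)N}$ piece from the light vertices, and residuals of order $\gamma T/(Ke)$ from the drift and the heavy-vertex contribution, which absorb into the stated $3\gamma T/(Ke)$ slack. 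The main technical obstacle will be this second-order analysis: coordinating Lemma~\ref{l:mult}, the probabilistic lower bound on $q_{d,s}(i,v)$, and the independence-number inequality so that the $\alpha(G_{\le d})/N$ scaling emerges cleanly from the sum over $v$, which could otherwise grow as large as $N$.
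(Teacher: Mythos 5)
Your proposal is correct and follows essentially the same route as the paper's own proof: the standard exponential-weights inequality per agent, the telescoped additive bound of Lemma~\ref{l:sandwich} to pay $2d+\eta dT$ for the delay in the first-order term, the multiplicative bound of Lemma~\ref{l:mult} to replace $p_{s+d}$ by $e\,p_s$ in the variance term, and the light/heavy split with the independence-number inequality (your inline derivation is exactly the paper's Lemma~\ref{l:q-bound} applied to $G_{\le d}$). The constants you track also match the stated bound.
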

With this bound handy, we might be tempted to optimize for $\gamma$. However, this is not a legal learning rate setting in a distributed scenario, for the optimized value of $\gamma$ would depend on the global quantities $N$ and $\alpha(G_{\leq d})$. Thus, instead of this global tuning, we let each agent set its own learning rate $\gamma$ through a ``doubling trick'' played locally. The doubling trick\footnote
{
There has been some recent work on adaptive learning rate tuning applied to nonstochastic bandit algorithms~\citep{k+14,neu15}. One might wonder whether the same techniques may apply here as well. Unfortunately, the specific form of our update~(\ref{eq:exp-upd}) makes this adaptation nontrivial, and this is why we resorted to a more traditional ``doubling trick".
}
works as follows. For each $v\in V$, we let
$\gamma_r(v) = Ke(d+1)\sqrt{(\ln K)/2^r}$
for each $r = r_0,r_0+1,\dots$, where $r_0 = \bigl\lceil\log_2\ln K + 2\log_2(Ke(d+1))\bigr\rceil$ is chosen in such a way that $\gamma_r(v) \le 1$ for all $r \ge r_0$. Let $T_r$ be the random set of consecutive time steps where the same $\gamma_r(v)$ was used. Whenever the local algorithm at $v$ is running with $\gamma_r(v)$ and detects $\sum_{s \in T_r} Q_s(v) > 2^r$, then we restart this algorithm with $\gamma(v) = \gamma_{r+1}(v)$. 

We have the following result.
\begin{theorem}
\label{th:main}
The regret of {\sc Exp3-Coop} run over a network $G = (V,E)$ of $N$ agents, each using delay $d$, and an individual learning rate $\eta(v) =  \gamma(v)/\bigl(Ke(d+1)\bigr)$, where $\gamma(v) \in (0,1]$ is adaptively selected by each agent through the above doubling trick, satisfies, when $T$ grows large,\footnote
{
The big-oh notation here hides additive terms that are independent of $T$ and do depend polynomially on the other parameters.
}
\begin{align*}
    R_T^{\mathrm{coop}}
&=
    \mathcal{O}\left(\sqrt{(\ln K)\left(d + 1+ \frac{K}{N}\,\alpha(G_{\le d})\right)T} + d\,\log T \right)~.
\end{align*}
\end{theorem}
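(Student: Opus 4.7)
The plan is to use the doubling trick as a local mechanism that keeps each agent's $\gamma_r(v)$ approximately optimal for the \emph{realized} variance terms rather than for the worst-case bound $T$. Take $Q_s(v) = \sum_{i \in A} p_s(i,v)/q_{d,s-d}(i,v)$ as the local quantity whose sum over $s$ drives the $\gamma T$ portion of Theorem~\ref{th:nontuned}; by~(\ref{eq:aveprob}) this coincides with $\E_{t-d}\bigl[\sum_i p_t(i,v)\hloss_t(i,v)^2\bigr]$ whenever losses lie in $[0,1]$, and each agent can compute $Q_s(v)$ from locally available information.

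First, I would rerun the proof of Theorem~\ref{th:nontuned} restricted to a single epoch $T_r$ at agent $v$, during which $\gamma_r(v)$ is held fixed and the restart condition guarantees $\sum_{s \in T_r} Q_s(v) \le 2^r$ up to a single-round overshoot. The per-epoch contribution to $v$'s regret is then at most
\[
    2d + \frac{Ke(d+1)\ln K}{\gamma_r(v)} + c\,\gamma_r(v)\cdot 2^r
\]
for a suitable constant $c$, where the $2d$ absorbs the warm-up rounds immediately after each restart. The choice $\gamma_r(v) = Ke(d+1)\sqrt{(\ln K)/2^r}$ balances the last two terms, yielding a per-epoch bound of order $d + \sqrt{2^r\ln K}$.

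Next I would sum over epochs at each agent. Since the restart condition makes the epoch-total $\sum_{s\in T_r}Q_s(v)$ roughly double at every restart, the largest epoch index $R(v)$ reached by round $T$ satisfies $2^{R(v)} \le 2^{r_0+1} + \sum_{s=1}^T Q_s(v)$, and the total number of epochs is $R(v)-r_0+1 = \mathcal{O}(\log T)$. Summing the geometric series $\sum_{r \le R(v)}\sqrt{2^r\ln K} = \mathcal{O}\bigl(\sqrt{2^{R(v)}\ln K}\bigr)$ gives a per-agent regret of
\[
    \mathcal{O}(d\log T) + \mathcal{O}\left(\sqrt{\Bigl(\sum_{s=1}^T Q_s(v) + 2^{r_0}\Bigr)\ln K}\right).
\]
Averaging over $v\in V$, taking expectation, and applying Jensen's inequality twice (once to move $\E$ inside the square root, once to move $\frac{1}{N}\sum_v$ inside) reduces matters to bounding $\frac{1}{N}\sum_v \E\sum_{s=1}^T Q_s(v)$.

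This last quantity is exactly what is controlled inside the proof of Theorem~\ref{th:nontuned} by the graph-theoretic argument combined with Lemma~\ref{l:mult}: it is at most of order $\tfrac{\alpha(G_{\le d})}{(d+1)N}\,T + \tfrac{T}{K}$. Substituting back and expanding the square root yields the claimed $\mathcal{O}\bigl(\sqrt{(\ln K)(d+1+\tfrac{K}{N}\alpha(G_{\le d}))T} + d\log T\bigr)$ bound. The main obstacle is that the doubling trick is local and asynchronous across agents, so the per-epoch application of Theorem~\ref{th:nontuned}'s analysis at $v$ must remain valid while neighbors are in different epochs using different learning rates; fortunately, the estimator~(\ref{eq:estimator}) depends on neighbors only through the probabilities $p_{t-d}(i,v')$ embedded in $q_{d,t-d}(i,v)$, which are well defined irrespective of $v'$'s own $\gamma$. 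The two-stage use of Jensen is what allows us to trade individual-agent $\sum_s Q_s(v)$ for the global average, which is precisely the quantity the graph argument actually controls.
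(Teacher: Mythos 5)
Your proposal follows essentially the same route as the paper's proof: a per-epoch application of the Theorem~\ref{th:nontuned} analysis under the restart condition on the locally computable $\sum_{s}Q_s(v)$, a geometric sum over $\mathcal{O}(\log T)$ epochs paying $\mathcal{O}(d)$ per restart, two applications of Jensen to pass to $\frac{1}{N}\sum_{v}\E\bigl[\sum_s Q_s(v)\bigr]$, and Lemma~\ref{l:q-bound} applied to $G_{\le d}$. The only slip is a normalization inconsistency in your stated bound on $\frac{1}{N}\sum_{v}\E\bigl[\sum_s Q_s(v)\bigr]$: for your unnormalized $Q_s(v)=\sum_i p_s(i,v)/q_{d,s-d}(i,v)$ the correct order is $\bigl(d+\tfrac{K}{N}\,\alpha(G_{\le d})\bigr)T$, the factor $1/(Ke(d+1))$ living in the learning rate rather than in $Q$, and with that correction the substitution indeed yields the claimed bound.
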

%
%
\begin{remark}
Theorem~\ref{th:main} shows a natural trade-off between delay and information. To make it clear, suppose $N \approx K$. In this case, the regret bound becomes of order $\sqrt{\bigl(d + \alpha(G_{\le d})\bigr)T\ln K} + d\ln T$. Now, if $d$ is as big as the diameter $d_G$ of $G$,
then $\alpha(G_{\le d})=1$. This means that at every time step all $N \approx K$ agents observe (with some delay) the losses of each other's actions. This is very much reminiscent of a full information scenario, and in fact our bound becomes of order $\sqrt{(d_G+1)T\ln K} + d_G\ln T$, which is close to the full information minimax rate $\sqrt{(d+1)T\ln K}$ when feedback has a constant delay $d$~\citep{weinberger2002delayed}. When $G$ is sparse (i.e., $d_G$ is likely to be large, say $d_G \approx N$), then agents have no advantage in taking $d = d_G$ since $d_G \approx N \approx K$. In this case, agents may even give up cooperation (choosing $d = 0$ in Figure~\ref{f:exp3-coop}),
and fall back on the standard bandit bound $\sqrt{TK\ln K}$, which corresponds to running {\sc Exp3-Coop} on the edgeless graph $G_0$. (No doubling trick is needed in this case, hence no extra $\log T$ term appears.)
\end{remark}
\begin{remark}
When $d = d_G$, each neighborhood $N_{\le d}(v)$ used in the loss estimate~(\ref{eq:estimator}) is equal to $V$, hence all agents receive the same feedback. Because they all start off from the same initial weights, the agents end up computing the same updates. This in turn implies that: (1) the individual regret incurred by each agent is the same as the average welfare regret $R_T^{\mathrm{coop}}$; (2) the messages exchanged by the agents (see Figure~\ref{f:protocol}) may be shortened by dropping the distribution part $\bp_{t-s}(v')$.
\end{remark}
\begin{remark}
\label{rem:choiceofd}
An interesting question is whether the agents can come up with a reasonable choice for the value of $d$ even when they lack any information whatsoever about the global structure of $G$. A partial answer to this question follows. It is easy to show that the choice $d = \sqrt{K}$ in Theorem~\ref{th:main} yields a bound on the average welfare regret of the form $K^{1/4}\sqrt{T\ln K} + \sqrt{K}(\ln T)$ {\em for all} $G$ (and irrespective to the value of $N = |V|$), provided $G$ is connected. This holds because, for any connected graph $G$, the independence number $\alpha(G_{\le d})$ is always bounded by\footnote
{
Because it holds for a worst-case (connected) $G$, this upper bound on $\alpha(G_{\le d})$ can be made tighter when specific graph topologies are considered.
}
$\bigl\lceil 2N\big/(d+2)\bigr\rceil$. To see why this latter statement is true, observe that the neighborhood $N_{\leq d/2}(v)$ of any node $v$ in $G_{\le d/2}$ contains at least $d/2+1$ nodes (including $v$), and any pair of nodes $v', v'' \in N_{\leq d/2}(v)$ are adjacent in $G_{\le d}$. Therefore, no independent set of $G_{\le d}$ can have size bigger than $\lceil 2N\big/(d+2)\bigr\rceil$. A more detailed bound is contained, e.g., in~\citep{fh97}.
\end{remark}


\section{Extensions: Cooperation with Individual Parameters}\label{s:many-d}
In this section, we analyze a modification of \textsc{Exp3-Coop} that allows each agent $v$ in the network to use a delay parameter $d(v)$ different from that of the other agents. We then show how such individual delays may improve the average welfare regret of the agents.
In the previous setting, where all agents use the same delay parameter $d$, messages have an implicit time-to-live equal to $d$. In this setting, however, agents may not have a detailed knowledge of the delay parameters used by the other agents. For this reason we allow an agent $v$ to generate messages with a time-to-live $ttl(v)$ possibly different from the delay parameter $d(v)$.
Note that the role of the two parameters $d(v)$ and $ttl(v)$ is inherently different. Whereas $d(v)$ rules the extent to which $v$ uses the messages received from the other agents, $ttl(v)$ limits the number of times a message from $v$ is forwarded to the other agents, thereby limiting the message complexity of the algorithm. In order to accomodate this additional parameter, we are required to modify the cooperative bandit protocol of Figure~\ref{f:protocol}. As in Section~\ref{s:multi}, we have an undirected communication network $G = (V,E)$ over the agents. However, in this new protocol the message that at the end of round $t$ each agent $v$ sends to his neighbors in $G$ has the format
\[
    m_t(v) = \Bigl\langle t,v,ttl(v),I_t(v),\loss_t\bigl(I_t(v)\bigr),\bp_t(v) \Bigr\rangle
\]
where $ttl(v)$ is the time-to-live parameter of agent $v$. Each message $m_{t-s}(v')$, which $v$ receives from a neighbor, first has its time-to-leave decremented by one. If the resulting value is positive, the message is forwarded to the other neighbors, otherwise it is dropped. Moreover, $v$ uses this message to update $\bp_t(v)$ only if $s \leq d(v)$.
Hence, at time $t$ an agent $v$ uses the message sent at time $t-s$ by $v'$ if and only if $\dist_G(v',v) = s$ with $s \le \min\{d(v),ttl(v')\}$, where $\dist_G(v,v')$ is
the shortest-path distance from $v'$ to $v$ in $G$.

Based on the collection ${\scP} = \{d(v),ttl(v)\}_{v\in V}$ of individual parameters, we define the directed graph $G_{\scP} = (V,E_{\scP})$ as follows: arc $(v',v) \in E_{\scP}$ if and only if $\dist_G(v,v') \leq \min\{d(v),ttl(v')\}$.
The in-neighborhood $N^-_{\scP}(v)$ of $v$ thus contains the set of all $v'\in V$ whose distance from $v$ is not larger than $\min\{d(v),ttl(v')\}$. Notice that, with this definition, $v\in N^-_{\scP}(v)$, so that $(V,E_{\scP})$ includes all self-loops $(v,v)$.
%
Figure~\ref{f:multi-d}(a) illustrates these concepts through a simple pictorial example.
\begin{remark}
\label{r:exo}
It is important to remark that the communication structure encoded by $\scP$ is an exogenous parameter of the regret minimization problem, and so our algorithms cannot trade it off against regret. In addition to that, the parameterization $\scP = \{d(v),ttl(v)\}_{v\in V}$ defines a simple and static communication graph which makes it relatively easy to express regret as a function of the amount of available communication. This would not be possible if we had each individual node $v$ decide whether to forward a message based, say, on its own local delay parameter $d(v)$. To see why, consider the situation where nodes $v$ and $v'$ are along the route of a message that is reaching $v$ before $v'$. The decision of $v$ to drop the message may clash with the willingness of $v'$ to receive it, and this may clearly happen when $d(v) < d(v')$. The structure of the communication graph resulting from this individual behavior of the nodes would be rather complicated. On the contrary, the time-to-live-based parametrization, which is commonly used in communication networks to control communication complexity, does not have this issue.
\end{remark}

\begin{figure}[t!]
\begin{picture}(-30,270)(-30,270)
\scalebox{0.65}{\includegraphics{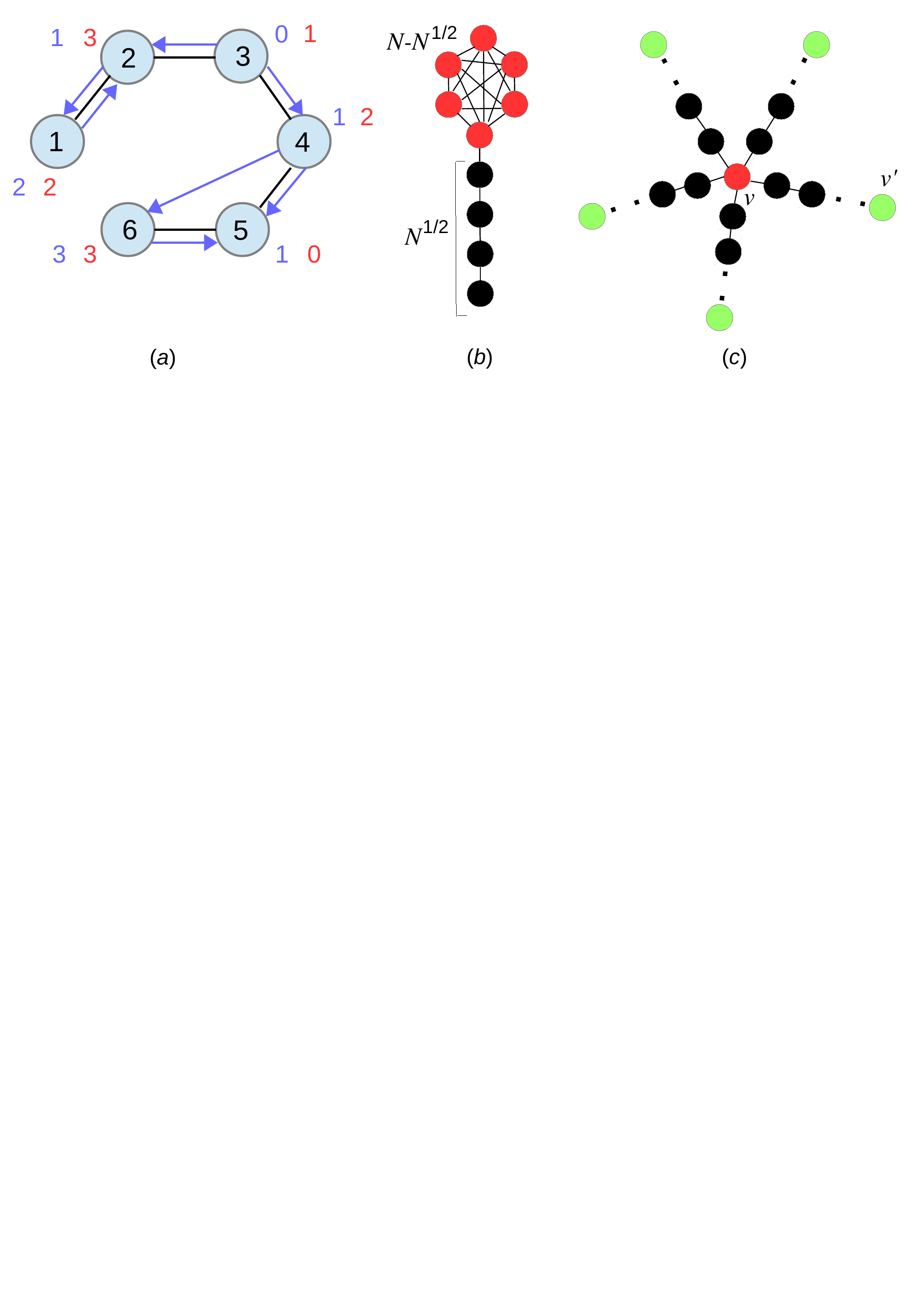}}
\end{picture}
\vspace{-1.8in}
\caption{\label{f:multi-d}
{\bf (a)} In this example, the communication network $G$ is an undirected line graph with $N =6$ agents, whose edges are depicted in black. Close to each node $v = 1, \ldots, 6$ is the individual delay $d(v)$ (in blue), and the individual time-to-leave $ttl(v)$ (in red). The arcs (a.k.a., directed edges) of the induced directed graph $G_{\scP}$ are also depicted in blue. Self-loops are not depicted. For instance, we have $N^-_{\scP}(5) =\{4,5,6\}$ and $N^-_{\scP}(3) =\{3\}$.
{\bf(b)} A communication network having a dense (red nodes) and a sparse (black nodes) region. The black region has $N^{1/2}$ agents, the red one has $N-N^{1/2}$ agents.
{\bf (c)} A star graph with long rays. The center $v$ (in red) sets a small $d(v)$ and a large $ttl(v)$. The peripheral nodes $v'$ (in green) set a large $d(v')$ and a small $ttl(v')$.
}
\end{figure}
Figure~\ref{f:exp3-coop2} contains our algorithm (called {\sc Exp3-Coop2}) for this setting. {\sc Exp3-Coop2} is a strict generalization of {\sc Exp3-Coop}, and so is its analysis. The main difference between the two algorithms is that {\sc Exp3-Coop2} deals with directed graphs. This fact prevents us from using the same techniques of Section~\ref{s:single-d} in order to control the regret. Intuitively, adding orientations to the edges reduces the information available to the agents and thus increases the variance of their loss estimates. Thus, in order to control this variance, we need a lower bound\footnote
{
We find it convenient to derive this lower bound without mixing with the uniform distribution over $A$ ---see, e.g., \citep{auer2002nonstochastic}--- but in a slightly different manner. This facilitates our delayed feedback analysis.
}
on the probabilities $p_t(i,v)$.
\begin{figure}[t!]
\bookbox{
\textbf{The Exp3-Coop2 Algorithm}\\
\textbf{Parameters:} Undirected graph $G = (V,E)$; learning rate $\eta$; exploration parameter $\delta > 0.$\\
\textbf{Init:} Each $v \in V$ sets weights $w_1(i,v) = 1$, for all $i \in A$, delay $d(v)$, and time-to-live $ttl(v)$.\\
\textbf{For $t=1,2,\dots$}
\begin{enumerate}
\item Each agent $v \in V$ plays action $I_t(v) \in A$ drawn according to distribution
$
    \bp_t(v) = (p_t(1,v), \ldots, p_t(K,v))~,
$
where
\[
p_t(i,v) = \frac{\tprob_t(i,v)}{\TProb_t(v)}, \qquad \TProb_t(v) = \sum_{j \in A} \tprob_t(j,v),
\]
and
\[
\tprob_t(i,v) = \max\left\{\frac{w_t(i,v)}{W_t(v)},\frac{\delta}{K}\right\}, \qquad W_t(v) = \sum_{j \in A} w_t(j,v)\,;
\]
\item Each agent $v \in V$ observes loss $\loss_t\bigl(I_t(v)\bigr)$ and exchanges messages with his neighbors (see main text for an explanation);
\item Each agent $v \in V$ performs the update $w_{t+1}(i,v) = p_t(i,v)\,\exp\bigl(-\eta\,\hloss_t(i,v)\bigr)$ for all $i \in A$,
where
%
\[
    \hloss_t(i,v) = \left\{ \begin{array}{cl}
        \displaystyle{\frac{\loss_{t-d(v)}(i)}{q_{\scP,t-d(v)}(i,v)}}\, B_{\scP,t-d(v)}(i,v) & \text{if $t > d(v)$,}
        \\
        0 & \text{otherwise,}
    \end{array} \right.
\]
%
and
$
B_{\scP,t-d(v)}(i,v) = \Ind{\exists v' \in N^-_{\scP}(v) \,:\, I_{t-d(v)}(v') = i}
$,
with
%
\[
    q_{\scP,t-d(v)}(i,v) = 1 - \prod_{v' \in N^-_{\scP}(v)}\bigl(1-p_{t-d(v)}(i,v')\bigr)~.
\]
\end{enumerate}
}
\caption{
The Exp3-Coop2 algorithm with individual delay and time-to-live parameters.
}
\label{f:exp3-coop2}
\end{figure}
From Figure~\ref{f:exp3-coop2}, one can easily see that
\begin{equation}\label{e:ptildebound}
1 = \sum_{i\in A}\frac{w_t(i,v)}{W_t(v)} \leq \TProb_t(v) \leq  \sum_{i\in A}\left( \frac{w_t(i,v)}{W_t(v)} + \frac{\delta}{K} \right) = 1 + \delta
\end{equation}
implying the lower bound
\(
p_t(i,v) \geq \frac{\delta}{K(1 + \delta)}\,,
\)
holding for all $i$, $t$, and $v$.


The following theorem
is the main result of this section.
\begin{theorem}\label{t:main-individual}
The regret of {\sc Exp3-Coop2} run over a network $G = (V,E)$ of $N$ agents, each agent $v$ using individual delay $d(v)$, individual time-to-leave $ttl(v)$, exploration parameter $\delta = 1/T$, and learning rate $\eta$ such that $\eta \rightarrow 0$ as $T \rightarrow \infty$ satisfies, when $T$ grows large,
\[
R_T^{\mathrm{coop}}
= \mathcal{O}\left(\frac{\ln K}{\eta} + \eta\Big({\bar d}_V + \frac{K}{N}\,\alpha\left(G_{\scP}\right)\,\ln(T N K)\Big)T\right)~,
    \quad\text{where} \quad {\bar d}_V = \frac{1}{N}\,\sum_{v\in V} d(v)~.
\]
\end{theorem}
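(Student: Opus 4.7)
The plan is to generalize the proof of Theorem~\ref{th:nontuned} along two axes: heterogeneous per-agent delays $d(v)$, and a \emph{directed} observation graph $G_\scP$ in place of the undirected power $G_{\le d}$. As in the homogeneous case, the skeleton is the standard \textsc{Exp3} potential argument, carried out separately at each agent $v$, combined with an independence-number lemma (\`a la Mannor--Shamir/Alon et al.)\ to control the second-moment contribution in terms of $\alpha(G_\scP)$.

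\textbf{Per-agent potential and unbiasing.} Fix any $v \in V$ and comparator $j \in A$. The standard exponential-weights telescoping applied to $w_t(\cdot,v)$, together with $e^{-x}\le 1-x+x^2/2$ on the exponent, gives
\[
\sum_{t=1}^T\sum_i p_t(i,v)\,\hloss_t(i,v) \;-\; \sum_{t=1}^T \hloss_t(j,v) \;\le\; \frac{\ln K}{\eta} + \frac{\eta}{2}\sum_{t=1}^T\sum_i p_t(i,v)\,\hloss_t(i,v)^2.
\]
The discrepancy between the weight-normalized $p_t(\cdot,v)$ and the actual sampling distribution $\bp_t(v)=\tprob_t(\cdot,v)/\TProb_t(v)$ is controlled by~(\ref{e:ptildebound}); with $\delta=1/T$ it produces only an $O(1)$ additive slack. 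Taking conditional expectations and invoking (\ref{eq:avevar})--(\ref{eq:aveprob})---applicable because the self-loop $v\in N^-_\scP(v)$ guarantees $q_{\scP,t-d(v)}(i,v)\ge p_{t-d(v)}(i,v)>0$---turns the left-hand side into the pseudo-regret of agent~$v$ indexed at the delayed times $t-d(v)$.

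\textbf{Delay drift.} To convert $p_t(i,v)\,\loss_{t-d(v)}(i)$ into $p_{t-d(v)}(i,v)\,\loss_{t-d(v)}(i)$, and hence into the actual expected cumulative loss of agent~$v$, I iterate Lemma~\ref{l:sandwich} over the $d(v)$ intervening steps. Summing the per-step correction over $t$ and telescoping yields a per-agent penalty of order $\eta\,d(v)\,T$; averaging over $v\in V$ produces the $\eta\,\bar d_V\,T$ term in the statement. The boundary index shift between $\loss_t$ and $\loss_{t-d(v)}$ costs only an additive $d(v)$ per agent, absorbed into the same term. The same drift lemma is also used below to write $p_t(i,v)\le (1+o(1))\,p_{t-d(v)}(i,v)$, which is legal under $\eta\to 0$.

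\textbf{Second-moment via a directed graph lemma.} The main technical step is bounding
\[
\frac{1}{N}\sum_v\sum_t\E\!\left[\sum_i p_t(i,v)\,\hloss_t(i,v)^2\right] \;\le\; \frac{1}{N}\sum_v\sum_t\sum_i\frac{p_t(i,v)}{q_{\scP,t-d(v)}(i,v)}.
\]
The main obstacle is heterogeneity: inside a single time-$t$ sum, $p_t(i,v)$ and $q_{\scP,t-d(v)}(i,v)$ live at different time indices that depend on $v$, which blocks a one-shot application of a graph bandit lemma. I resolve this by first using the drift bound to replace $p_t(i,v)$ by $(1+o(1))\,p_{t-d(v)}(i,v)$, then changing the summation variable to $s=t-d(v)$ for each agent, and finally applying the graph lemma per time step $s$. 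Concretely, I lower-bound $q_{\scP,s}(i,v)=1-\prod_{v'\in N^-_\scP(v)}(1-p_s(i,v'))$ by $(1-e^{-1})\,\min\bigl\{1,\,\sum_{v'\in N^-_\scP(v)} p_s(i,v')\bigr\}$ and invoke the directed-graph independence-number lemma on $G_\scP$ (which has self-loops by construction), using the uniform floor $p_s(i,v)\ge \delta/(K(1+\delta))=\Theta(1/(KT))$ from (\ref{e:ptildebound}) to control the $\ln(1/p_{\min})$ factor. This yields $\sum_v p_s(i,v)/q_{\scP,s}(i,v)=O\bigl(\alpha(G_\scP)\,\ln(TNK)\bigr)$ for each $(s,i)$. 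Summing over $i$ (factor $K$), over $t$ (factor $T$), averaging over $v$ (factor $1/N$), and multiplying by $\eta$ produces the $\eta\,(K/N)\,\alpha(G_\scP)\,\ln(TNK)\,T$ contribution; combining with $\ln K/\eta$ from the potential and $\eta\,\bar d_V\,T$ from the drift gives the claimed bound.
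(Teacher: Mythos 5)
Your proposal is correct and follows essentially the same route as the paper's proof: the same per-agent potential argument with the $(1+\delta)^T$ slack from the clipped probabilities, the same additive and multiplicative drift lemmas (which the paper re-derives for the clipped update as Lemmas~\ref{la:sandwich} and~\ref{la:mult}, the latter yielding a constant factor $e$ rather than your claimed $1+o(1)$, which is all that is needed), and the same reindexed, per-time-step application of the directed-graph independence-number lemma with the $\delta/(K(1+\delta))$ floor producing the $\ln(TNK)$ factor. There is no substantive gap.
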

Using a doubling trick in much the same way we used it to prove Theorem~\ref{th:main}, we can prove the following result.
\begin{corollary}\label{c:main-individual-tuned}
The regret of {\sc Exp3-Coop2} run over a network $G = (V,E)$ of $N$ agents, each agent $v$ using individual delay $d(v)$, individual time-to-leave $ttl(v)$, exploration parameter $\delta = 1/T$, and individual learning rate $\eta(v)$ adaptively selected by each agent through a doubling trick, satisfies, when $T$ grows large
\[
    R_T^{\mathrm{coop}}
=
    \mathcal{O}\left(\sqrt{(\ln K)\left({\bar d}_V + 1 + \frac{K}{N}\,\alpha(G_{\scP})\ln(TNK)\right)T} + {\bar d}_V\,\big(\ln T + \ln\ln(TNK)\big) \right)~.
\]
\end{corollary}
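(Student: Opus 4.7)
The plan is to reduce Corollary \ref{c:main-individual-tuned} to Theorem \ref{t:main-individual} by an agent-local doubling trick, structurally the same as the one used to derive Theorem \ref{th:main} from Theorem \ref{th:nontuned}. Writing Theorem \ref{t:main-individual}'s bound as $R_T^{\mathrm{coop}} = \mathcal{O}\bigl(\eta^{-1}\ln K + \eta\,C\,T\bigr)$ with $C = \bar d_V + (K/N)\alpha(G_{\scP})\ln(TNK)$, the oracle choice $\eta^\star = \sqrt{(\ln K)/(CT)}$ immediately gives the main $\sqrt{(\ln K)CT}$ term. The obstacle is that $C$ involves global objects ($\bar d_V$, $\alpha(G_{\scP})$) and the horizon $T$, none of which is known to the individual agents running \textsc{Exp3-Coop2}. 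The plan is therefore to have each agent $v$ discover its slice of $C$ on the fly.

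Inspecting the proof of Theorem \ref{t:main-individual} (which, as in Theorem \ref{th:nontuned}, controls the regret via a sum of local per-round quantities), each agent $v$ can compute, round by round, a locally observable surrogate of its contribution to $CT$, of the form $\sigma_t(v) = d(v) + \sum_{i\in A} p_t(i,v)\big/q_{\scP,t-d(v)}(i,v)$ once $t>d(v)$. Agent $v$ then runs in epochs indexed by $r \ge r_0$ with learning rate $\eta_r(v) = c\sqrt{(\ln K)/2^r}$, for a suitable constant $c$ and $r_0 = \lceil \log_2\ln K + 2\log_2(K e (d(v)+1))\rceil$, advancing to epoch $r+1$ (and reinitializing its weights $w(\cdot,v)=1$) as soon as $\sum_{t\in T_r(v)} \sigma_t(v)$ crosses the threshold $2^r$. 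Since $\sigma_t(v) = \mathcal{O}(1)$ deterministically (using the lower bound $p_t(i,v) \ge \delta/(K(1+\delta))$ from \eqref{e:ptildebound} together with $\delta = 1/T$ and the definition of $q_{\scP,\cdot}$), the number of epochs per agent is $\mathcal{O}(\log T + \log\log(TNK))$.

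Within each epoch, Theorem \ref{t:main-individual} applies with the per-epoch learning rate $\eta_r(v)$ and yields a regret proportional to $\eta_r(v)^{-1}\ln K + \eta_r(v)\sum_{t\in T_r(v)}\sigma_t(v)$; by the stopping rule the second term is $\mathcal{O}\bigl(\eta_r(v)\,2^r\bigr) = \mathcal{O}\bigl(\sqrt{(\ln K)\,2^r}\bigr)$, which matches the first term. Summing the geometric series over the $\mathcal{O}(\log T)$ epochs per agent gives a per-agent regret of $\mathcal{O}\bigl(\sqrt{(\ln K)\sum_{t\le T}\sigma_t(v)}\bigr)$ plus an additive $\mathcal{O}\bigl(d(v)(\log T + \log\log(TNK))\bigr)$ accounting for the cold-start rounds lost after each restart (during which $\hloss \equiv 0$). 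Averaging over $v$ and applying Jensen's inequality to $\sqrt{\cdot}$ replaces $\tfrac{1}{N}\sum_v \sum_t \sigma_t(v)$ with its expectation, which the variance analysis underlying Theorem \ref{t:main-individual} bounds by a constant times $\bigl(\bar d_V + (K/N)\alpha(G_{\scP})\ln(TNK)\bigr)T$ --- exactly the quantity $CT$.

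The main obstacle I anticipate is rigorously justifying that Theorem \ref{t:main-individual} can be invoked epoch-by-epoch even though different agents are in different local epochs at the same round $t$, so that the per-round learning rate $\eta_t(v)$ varies across agents. This is delicate because the second-moment bound underpinning Theorem \ref{t:main-individual} is a global object involving all of $\{p_t(\cdot,v)\}_{v\in V}$ jointly through $\alpha(G_{\scP})$. The way out is that this second-moment bound is proved round-by-round and does not use the value of $\eta$: the argument bounds $\sum_v \E_t\bigl[\sum_i p_t(i,v)/q_{\scP,t-d(v)}(i,v)\bigr]$ purely as a graph-theoretic quantity, independently of the weight updates. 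Hence the per-round contribution $\tfrac{1}{N}\sum_v \eta_t(v)\,\sigma_t(v)$ can still be controlled, and the remainder of the argument proceeds as in the proof of Theorem \ref{th:main}.
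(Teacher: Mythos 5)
Your proposal matches the paper's intended proof: the paper gives no details for this corollary beyond stating that it follows from Theorem~\ref{t:main-individual} via the same agent-local doubling trick used to derive Theorem~\ref{th:main} from Theorem~\ref{th:nontuned}, which is precisely what you carry out (including the locally observable surrogate playing the role of $Q_t(v)$, the restart cost of $\mathcal{O}(d(v))$ per epoch, and the final Jensen step feeding into the variance bound of Lemma~\ref{la:q-bound}). One small correction: $\sigma_t(v)$ is not $\mathcal{O}(1)$, and the lower bound $p_t(i,v)\ge\delta/(K(1+\delta))$ with $\delta=1/T$ only yields $\mathcal{O}(KT)$ per round; the right route is Lemma~\ref{la:mult}, which gives $p_t(i,v)\le e\,p_{t-d(v)}(i,v)\le e\,q_{\scP,t-d(v)}(i,v)$ and hence $\sigma_t(v)\le d(v)+eK$ deterministically, which still bounds the number of epochs by $\mathcal{O}(\log T+\log\log(TNK))$ plus $T$-independent terms, so your conclusion is unaffected.
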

To illustrate the advantage of having individual delays as opposed to sharing the same delay value, it suffices to consider a communication network including regions of different density. Concretely, consider the graph in Figure~\ref{f:multi-d}(b) with a large densely connected region (red agents) and a small sparsely connected region black agents). In this example, the black agents prefer a large value of their individual delay so as to receive more information from nearby agents, but this comes at the price of a larger bias for their estimators $\hloss_t(i,v)$. On the contrary, information from nearby agents is readily available to the red agents, so that they do not gain any regret improvement from a large delay parameter.
A similar argument applies here to the individual time-to-live values: red agents $v$ will set a small $ttl(v)$ to reduce communication. Black agents $v'$ may decide to set $ttl(v')$ depending on their intention to reach the red nodes. But because the red agents have set a small $d(v)$, any effort made by $v'$ trying to reach them would be a communication waste. Hence, it is reasonable for a black agent $v'$ to set a moderately large value for $ttl(v')$, but perhaps not so large as to reach the red agents. One can read this off the bounds in both Theorem~\ref{t:main-individual} and Corollary~\ref{c:main-individual-tuned}, as explained next. Suppose for simplicity that $K \approx N$ so that, disregarding log factors, these bounds depend on parameters $\scP$ only through the quantity
$
H = {\bar d}_V + \alpha\left(G_{\scP}\right)
$.
Now, in the case of a common delay parameter $d$ (Section~\ref{s:single-d}), it is not hard to see that the best setting for $d$ in order to minimize $H$ is of the form $d = N^{1/4}$, resulting in $H = \Theta(N^{1/4})$. On the other hand, the best setting for the individual delays is $d(v) = 1$ when $v$ is red, and $d(v) = \sqrt{N}$ when $v$ is black, resulting in $H = \Theta(1)$.

The time-to-live parameters $ttl(v)$ affect the regret bound only through $\alpha\left(G_{\scP}\right)$, but they clearly play the additional role of bounding the message complexity of the algorithm. In our example of Figure \ref{f:multi-d}(b), we essentially have $d(v) \approx ttl(v)$ for all $v$. A typical scenario where agents may have $d(v) \neq ttl(v)$ is illustrated in Figure~\ref{f:multi-d}(c). In this case, we have star-like graph where a central agent is connected through long rays to all others agents. The center $v$ prefers to set a small $d(v)$, since it has a large degree, but also a large $ttl(v)$ in order to reach the green peripheral nodes. The green nodes $v'$ are reasonably doing the opposite: a large $d(v')$ in order to gather information from other nodes, but also a smaller time-to-live than the center, for the information transmitted by $v'$ is comparatively less valuable to the whole network than the one transmitted by the center.

Agents can set their individual parameters in a topology-dependent manner using any algorithm for assessing the centrality of nodes in a distributed fashion ---e.g.,~\citep{wz13}, and references therein. This can be done at the beginning in a number of rounds which only depends on the network topology (but not on $T$). Hence, this initial phase would affect the regret bound only by an additive constant.

\section{Delayed Losses (for a Single Agent)}\label{s:delayed}
\textsc{Exp3-Coop} can be specialized to the setting where a single agent is facing a bandit problem in which the loss of the chosen action is observed with a fixed delay $d$. In this setting, at the end of each round $t$ the agent incurs loss $\loss_t(I_t)$ and observes $\loss_{t-d}(I_{t-d})$, if $t > d$, and nothing otherwise. The regret is defined in the usual way,
\[
    R_T = \E\left[\sum_{t=1}^T \loss_t(I_t)\right] - \min_{i=1,\dots,K} \sum_{t=1}^T \loss_t(i)~.
\]
This problem was studied by~\cite{weinberger2002delayed} in the full information case, for which they proved that $\sqrt{(d+1)T\ln K}$ is the optimal order for the minimax regret. The result was extended to the bandit case by~\cite{neu2010online,neu2014online} ---see also~\cite{joulani2013online}--- whose techniques can be used to obtain a regret bound of order $\sqrt{(d+1)KT}$. Yet, no matching lower bound was available for the bandit case.

As a matter of fact, the upper bound $\sqrt{(d+1)KT}$ for the bandit case is easily obtained: just run in parallel $d+1$ instances of the minimax optimal bandit algorithm for the standard (no delay) setting, achieving $R_T \le \sqrt{KT}$ (ignoring constant factors). At each time step $t = (d+1)r + s$ (for $r=0,1,\dots$ and $s=0,\dots,d$), use instance $s+1$ for the current play. Hence, the no-delay bound applies to every instance and, assuming $d+1$ divides $T$, we immediately obtain
%
\(
    R_T \le \sum_{s=1}^{d+1} \sqrt{K\frac{T}{d+1}} \le \sqrt{(d+1)KT}~,
\)
%
again, ignoring constant factors.

Next, we show that the machinery we developed in Section \ref{s:single-d} delivers an improved upper bound on the regret for the bandit problem with delayed losses, and then we complement this result by providing a lower bound matching the upper bound up to log factors, thereby characterizing (up to log factors) the minimax regret for this problem.
\begin{corollary}\label{c:delayed}
In the nonstochastic bandit setting with $K \ge 2$ actions and delay $d \ge 0$, where at the end of each round $t$ the predictor has access to the losses $\loss_1(I_1),\dots,\loss_s(I_s)\in [0,1]^K$ for $s = \max\{1,t-d\}$, the minimax regret is of order
$
    \sqrt{(K + d)T}~,
$
ignoring logarithmic factors.
\end{corollary}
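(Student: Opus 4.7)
\textbf{Proof plan for Corollary~\ref{c:delayed}.} The corollary has two parts, an upper bound and a matching lower bound, and I would treat them separately.

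\textbf{Upper bound.} I would obtain the $O\bigl(\sqrt{(K+d)T\ln K}\bigr)$ upper bound by specializing Theorem~\ref{th:main} to the single-agent case $N=1$. In this regime the communication graph $G$ consists of one isolated vertex, so $G_{\le d}$ is also a single vertex and its independence number is $\alpha(G_{\le d}) = 1$, while $K/N = K$. Moreover, the reduction of \textsc{Exp3-Coop} to the single-agent case is exactly the \textsc{Exp3} variant that performs an update on the $d$-step delayed importance weighted estimate as soon as the corresponding loss becomes available, which is precisely the algorithm we need for the bandit-with-delay protocol. Plugging $N=1$ and $\alpha(G_{\le d})=1$ into the bound of Theorem~\ref{th:main} yields
\[
R_T = \mathcal{O}\!\left(\sqrt{(\ln K)(d+1+K)\,T}+d\log T\right) = \mathcal{O}\!\left(\sqrt{(d+K)T\ln K}+d\log T\right),
\]
which is the desired upper bound up to logarithmic factors.

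\textbf{Lower bound.} For $R_T = \Omega\bigl(\sqrt{(K+d)T}\bigr)$ I would separately argue each of the two additive contributions and then take the maximum. First, the classical minimax lower bound of \cite{auer2002nonstochastic} gives $R_T = \Omega(\sqrt{KT})$ already in the case $d=0$; since any algorithm for the delay-$d$ setting can be used in the $d=0$ setting (the feedback at $d=0$ is only more informative), this lower bound carries over to every $d\ge 0$. Second, \cite{weinberger2002delayed} prove a $\Omega(\sqrt{dT\ln K})$ lower bound in the full-information setting with constant delay $d$; since the bandit feedback is weaker than the full-information feedback (any bandit algorithm can also be run in a full-information setting by simply ignoring the extra losses), a full-information lower bound is a fortiori a bandit lower bound, so $R_T = \Omega(\sqrt{dT})$ in our setting as well. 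Combining the two bounds via $\max\{\sqrt{KT},\sqrt{dT}\} \ge \tfrac{1}{\sqrt{2}}\sqrt{(K+d)T}$ gives $R_T = \Omega(\sqrt{(K+d)T})$.

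\textbf{Main obstacle.} The upper bound is a direct specialization and should be routine once the identifications $N=1$ and $\alpha(G_{\le d})=1$ are made. The delicate part is the lower bound: one must be careful that the two reductions indeed preserve the adversary's power (i.e., that the construction achieving $\Omega(\sqrt{dT\ln K})$ in the full-information delayed setting can be instantiated as a legitimate adversary in our bandit-with-delay model). This is straightforward because the adversary in \cite{weinberger2002delayed} is oblivious and the full-information feedback strictly dominates bandit feedback, but it is the only nontrivial bookkeeping step; apart from this, everything else reduces to invoking results already in the literature.
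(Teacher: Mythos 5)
Your proposal is correct and follows essentially the same route as the paper: the upper bound by specializing the cooperative regret bound to $N=1$ and $\alpha(G_{\le d})=1$ (the paper invokes Theorem~\ref{th:nontuned} with a directly tuned $\gamma$ rather than Theorem~\ref{th:main} with the doubling trick, which is immaterial up to the lower-order $d\log T$ term), and the lower bound by taking the maximum of the $\Omega(\sqrt{KT})$ no-delay bandit bound and the $\Omega(\sqrt{(d+1)T\ln K})$ delayed full-information bound via the same feedback-domination reductions. The only difference is that the paper reproves the delayed full-information lower bound explicitly through the block-averaging reduction of \cite{weinberger2002delayed} instead of citing it, which does not change the argument.
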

%

%

\section{Conclusions and Ongoing Research}\label{s:conc}
We have investigated a cooperative and nonstochastic bandit scenario where cooperation comes at the price of delayed information. We have proven average welfare regret bounds that exhibit a natural tradeoff between amount cooperation and delay, the tradeoff being ruled by the underlying communication network topology. As a by-product of our analysis, we have also provided the first characterization to date of the regret of learning with (constant) delayed feedback in an adversarial bandit setting.
There are a number of possible extensions which we are currently considering:
\begin{enumerate}
\item So far our analysis only delivers average welfare regret bounds. It would be interesting to show simultaneous regret bounds that hold for each agent individually. We conjecture that the individual regret bound of an agent $v$ is of the form $\sqrt{(\ln K)\left(d+\frac{K}{|N_{\le d}(v)|}\right)\,T}$, where $|N_{\le d}(v)|$ is the degree of $v$ in $G_{\le d}$ (plus one). Such bound would in fact imply, e.g., the one in Theorem~\ref{th:main}. A possible line of attack to solve this problem could be the use of graph sparsity along the lines of~\citep{NIPS2015_5814,NIPS2013_4939,mania2015perturbed,NIPS2014_5242}.
%
\item It would be nice to characherize the average welfare regret by complementing our upper bounds with suitable {\em lower} bounds: Is the upper bound of Theorem~\ref{th:main} optimal in the communication model considered here?
%
\item The two algorithms we designed do not use the loss information in the most effective way, for they both postpone the update step by $d$ (Figure \ref{f:exp3-coop}) or $d(v)$ ((Figure \ref{f:exp3-coop2}) time steps. In fact, we do have generalized versions of both algorithms where all losses $\ell_{t-s}(i)$ coming from agents at distance $s$ from any given agent $v$ are indeed used at time $t$ by agent $v$ i.e., as soon as these losses become available to $v$.
The resulting regret bounds mix delays and independence numbers of graphs at different levels of delay. (Details will be given in the full version of this paper.)
More ambitiously, it is natural to think of ways to adaptively tune our algorithms so as to automatically determine the best delay parameter $d$. For instance, disregarding message complexity, is there a way for each agent to adaptively tune $d$ locally so to minimize the bound in Theorem~\ref{th:main}?
\item Our messages $m_t(v)$ contain both action/loss information and distribution information. Is it possible to drop the distribution information and still achieve average welfare regret bounds similar to those in Theorems~\ref{th:nontuned} and~\ref{th:main}?
\item Even for the single-agent setting, we do not know whether regret bounds of the form $\sqrt{(D+T)\ln K}$, where $D$ is the total delay experienced over the $T$ rounds, could be proven ---see~\citep{DBLP:conf/aaai/JoulaniGS16,NIPS2015_5833} for similar results in the full-information setting. In general, the study of learning on a communication network with time-varying delays, and its impact on the regret rates, is a topic which is certainly worth of attention.
\end{enumerate}

\subsection*{Acknowledgments}
We thank the anonymous reviewers for their careful reading, and for their thoughtful suggestions that greatly improved the presentation of this paper.
Yishay Mansour is supported in part by the Israeli Centers of Research Excellence (I-CORE) program, (Center No.~4/11), by a grant from the Israel Science Foundation (ISF), by a grant from United States-Israel Binational Science Foundation (BSF) and by a grant from the Len Blavatnik and the Blavatnik Family Foundation.

\bibliography{cesa-bianchi16}

\appendix


\section{Proofs from Section \ref{s:prel}}\label{a:prel}

\proofof{Lemma \ref{l:sandwich}}

\begin{proof}
Directly from the definition of the update~(\ref{eq:exp-upd}), $w_{t+1}(i) \le p_t(i)$ for all $i \in A$, so that $W_{t+1} \le 1$, which in turn implies $w_{t+1}(i) \le w_{t+1}(i)/W_{t+1} = p_{t+1}(i)$. Therefore
\begin{align*}
    p_{t+1}(i) - p_t(i)
&\ge
    w_{t+1}(i) - p_t(i)\\
&=
    p_t(i)\left(e^{-\eta\,\hloss_t(i)} - 1\right)\\
&\ge
    -\eta\,p_t(i)\hloss_t(i)~,
\end{align*}
the last inequality using $1-e^{-x} \leq x$ for $x \geq 0$.
Similarly,
\begin{align*}
    p_{t+1}(i) - p_t(i)
&\le
    p_{t+1}(i) - w_{t+1}(i)\\
&=
    p_{t+1}(i) - p_{t+1}(i)W_{t+1}\\
&=
    p_{t+1}(i)\sum_{j \in A} \bigl(p_t(j) - w_{t+1}(j)\bigr)\\
&=
    p_{t+1}(i)\sum_{j \in A} p_t(j)\left(1 - e^{-\eta\,\hloss_t(j)}\right)\\
&\le
    \eta\,p_{t+1}(i)\sum_{j \in A} p_t(j)\hloss_t(j)
\end{align*}
concluding the proof.
\end{proof}

\proofof{Lemma \ref{l:mult}}
\begin{proof}
We proceed by induction over $t$. For all $t \le d$, $\hloss_t(\cdot) = 0$. Hence $p_t(\cdot) = 1/K$, and the lemma trivially holds. For $t > d$ we can write
\begin{align*}
    \sum_{i\in A} p_t(i)\hloss_t(i)
&=
    \sum_{i\in A} p_t(i)\frac{\loss_{t-d}(i)}{q_{t-d}(i)} B_{t-d}(i)
\\ &\le
    \sum_{i\in A} \frac{p_t(i)}{q_{t-d}(i)}
    \qquad\qquad\qquad\ \ \  \text{(because $B_{t-d}(i)\loss_{t-d}(i) \leq 1$)}
\\ &\le
    \sum_{i\in A} \left(1 + \frac{1}{d}\right)^d\frac{p_{t-d}(i)}{q_{t-d}(i)}
    \qquad \text{(by the inductive hypothesis)}
\\ &\le
    \left(1 + \frac{1}{d}\right)^d K
    \qquad\qquad\qquad \text{(because $q_{t-d}(i) \ge p_{t-d}(i)$)}
\\ &\le
    Ke~.
\end{align*}
Hence, using Lemma \ref{l:sandwich},
\[
    p_{t+1}(i)\bigl(1-\eta\,Ke\bigr)
\le
    p_{t+1}(i)\left(1-\eta\,\sum_{j \in A} p_t(j)\hloss_t(j)\right)
\le
    p_t(i)
\]
which implies
$
    p_{t+1}(i) \le \left(1 + \frac{1}{d}\right)p_t(i)
$
whenever $\eta \le \frac{1}{Ke(d+1)}$.
\end{proof}

\section{Proofs from Section \ref{s:single-d}}\label{a:single-d}

The next lemma relates the variance of the estimates~(\ref{eq:estimator}) to the structure of the communication graph $G$. The lemma is stated for a generic undirected communication graph $G$, but our application of it
actually involves graph $G_{\le d}$.
\begin{lemma}
\label{l:q-bound}
Let $G = (V,E)$ be an undirected graph with independence number $\alpha(G)$. For each $v \in V$, let $N_{\le 1}(v)$ be the neighborhood of node $v$ (including $v$ itself), and $\bp(v) = \bigl(p(1,v),\dots,p(K,v)\bigr)$ be a probability distribution over $A = \{1,\dots,K\}$. Then, for all $i \in A$,
\[
    \sum_{v \in V} \frac{p(i,v)}{q(i,v)}
\le
    \frac{1}{1-e^{-1}}\left(\alpha(G) + \sum_{v\in V} p(i,v) \right) \quad
\text{where} \quad q(i,v) = 1 - \prod_{v' \in N_{\le 1}(v)}\bigl(1-p(i,v')\bigr)~.
\]
\end{lemma}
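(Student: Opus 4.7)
The plan is to reduce the problem to two separate regimes determined by the mass of the closed neighborhood, and then handle the delicate regime by a greedy independent-set argument of the kind familiar from the graph-structured bandit literature.

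First I would fix $i \in A$ and lighten notation by writing $p_v = p(i,v)$, $q_v = q(i,v)$, and $A_v = \sum_{v' \in N_{\le 1}(v)} p_{v'}$. Termwise application of $1-x \le e^{-x}$ yields the ``Poisson'' lower bound $q_v \ge 1 - e^{-A_v}$. Next, partition $V$ into $V_+ = \{v : A_v \ge 1\}$ and $V_- = \{v : A_v < 1\}$. On $V_+$, the bound $q_v \ge 1-e^{-1}$ gives $\sum_{v \in V_+} p_v/q_v \le (1-e^{-1})^{-1}\sum_{v} p_v$. On $V_-$, I would exploit the concavity of $x \mapsto 1-e^{-x}$ on $[0,1]$: since this function agrees with the linear map $x \mapsto (1-e^{-1})x$ at $x=0$ and $x=1$ and lies above it in between, $q_v \ge (1-e^{-1})A_v$, and hence $\sum_{v \in V_-} p_v/q_v \le (1-e^{-1})^{-1}\sum_{v \in V_-} p_v/A_v$.

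The main obstacle is then to establish the auxiliary inequality $\sum_{v \in V} p_v/A_v \le \alpha(G)$, which bounds the $V_-$ contribution at once. I would prove this by a greedy scheme: initialize $V_1 = V$ and, at step $k$, pick $v_k = \arg\min_{v \in V_k} A_v$, insert $v_k$ into a set $S$, and remove $N_{\le 1}(v_k) \cap V_k$ from $V_k$ to obtain $V_{k+1}$. Every vertex $u \in N_{\le 1}(v_k) \cap V_k$ was still alive at step $k$, so by the choice of $v_k$ we have $A_u \ge A_{v_k}$, giving
\[
\sum_{u \in N_{\le 1}(v_k) \cap V_k} \frac{p_u}{A_u} \le \frac{1}{A_{v_k}}\sum_{u \in N_{\le 1}(v_k)} p_u = 1.
\]
Summing over $k$ covers each vertex exactly once, so $\sum_{v} p_v/A_v \le |S|$. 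Moreover $S$ is an independent set in $G$: when $v_k$ is chosen, every previously selected $v_j$ ($j<k$) had $v_j$'s entire closed neighborhood stripped from the alive set, so $(v_j,v_k) \notin E$. Hence $|S| \le \alpha(G)$.

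Putting the two regimes together yields
\[
\sum_{v \in V} \frac{p_v}{q_v} \le \frac{1}{1-e^{-1}}\Bigl(\sum_{v \in V_+} p_v + \sum_{v \in V_-} \frac{p_v}{A_v}\Bigr) \le \frac{1}{1-e^{-1}}\Bigl(\alpha(G) + \sum_{v \in V} p_v\Bigr),
\]
which is the desired inequality. The trickiest step is the greedy argument: it is important to use the original values $A_v$ (not the ones in the surviving subgraph) so that the neighborhood sum telescopes cleanly to $1$ at each round, while the independence of $S$ follows automatically from the closed-neighborhood removal.
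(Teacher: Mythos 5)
Your proof is correct and follows essentially the same route as the paper's: the same partition of $V$ according to whether the closed-neighborhood mass $A_v$ is at least $1$, the same two lower bounds $q_v \ge 1-e^{-1}$ and $q_v \ge (1-e^{-1})A_v$, and the same reduction to the auxiliary inequality $\sum_v p_v/A_v \le \alpha(G)$. The only difference is that the paper obtains that last inequality by citing Lemma~10 of \cite{alon2014nonstochastic}, whereas you reprove it inline via the standard greedy independent-set argument (which is exactly how that cited lemma is proved), so your write-up is self-contained but not materially different.
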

\begin{proof}
Fix $i \in A$ and set for brevity $P(i,v) = \sum_{v' \in N_{\le 1}(v)} p(i,v')$. We can write
\begin{align*}
    \sum_{v \in V} \frac{p(i,v)}{q(i,v)}
&=
    \underbrace{\sum_{v \in V \,:\, P(i,v) \ge 1} \frac{p(i,v)}{q(i,v)}}_{\mathrm{(I)}}
\quad + \quad
    \underbrace{\sum_{v \in V\,:\, P(i,v) < 1} \frac{p(i,v)}{q(i,v)}}_{\mathrm{(II)}}~,
\end{align*}
and proceed by upper bounding the two terms~(I) and~(II) separately. Let $r(v)$ be the cardinality of $N_{\le 1}(v)$. We have, for any given $v \in V$,
\[
    \min\left\{  q(i,v) \,:\, \sum_{v' \in N_{\le 1}(v)} p(i,v') \ge 1 \right\}
=
    1-\left(1-\frac{1}{r(v)}\right)^{r(v)}
\ge
    1-e^{-1}~.
\]
The equality is due to the fact that the minimum is achieved when $p(i,v') = \frac{1}{r(v)}$ for all $v' \in N_{\le 1}(v)$, and the inequality comes from $r(v) \ge 1$ (for, $v \in N_{\le 1}(v)$). Hence
\begin{align*}
    \mathrm{(I)}
\le
    \sum_{v \in V \,:\, P(i,v) \ge 1} \frac{p(i,v)}{1-e^{-1}}
\le
    \sum_{v \in V} \frac{p(i,v)}{1-e^{-1}}~.
\end{align*}
As for~(II), using the inequality $1-x \leq e^{-x}, x\in [0,1]$, with $x = p(i,v')$, we can write
\begin{align*}
    q(i,v)
\ge
    1-\exp\left(-\sum_{v' \in N_{\le 1}(v)} p(i,v')\right)
=
    1-\exp\left(- P(i,v)\right)~.
\end{align*}
In turn, because $P(i,v) < 1$ in terms (II), we can use the inequality $1-e^{-x} \geq (1-e^{-1})\,x$, holding when $x \in [0,1]$, with $x = P(i,v)$, thereby concluding that
\[
 q(i,v) \ge (1-e^{-1})P(i,v)
\]
%
Thus
\begin{align*}
    \mathrm{(II)}
\le
    \sum_{v \in V\,:\, P(i,v) < 1} \frac{p(i,v)}{(1-e^{-1})P(i,v)}
\le
    \frac{1}{1-e^{-1}}\,\sum_{v \in V} \frac{p(i,v)}{P(i,v)}
\le
    \frac{\alpha(G)}{1-e^{-1}}~,
\end{align*}
where in the last step we used \cite[Lemma 10]{alon2014nonstochastic}. Notice that despite the statement of this lemma refers to a directed graph and its maximum acyclic subgraph, in the special case of undirected graphs, the size of the maximum acyclic subgraph coincides with the independence number. Moreover, observe that $p(i,1),\dots,p(i,N) \ge 0$ need not sum to one in order for this lemma to hold.
\end{proof}

\proofof{Theorem \ref{th:nontuned}}

\begin{proof}
The standard analysis of the exponentially-weighted algorithm with importance-sampling estimates (see, e.g., the proof of \cite[Lemma~1]{alon2014nonstochastic}) gives for each agent $v$ and each action $k$ the deterministic bound
\begin{equation}
\label{eq:exp3}
    \sum_{t=1}^T \sum_{i=1}^K p_t(i,v) \hloss_t(i,v) \le \sum_{t=1}^T \hloss_t(k,v) + \frac{\ln K}{\eta} + \frac{\eta}{2} \sum_{t=1}^T \sum_{i=1}^K p_t(i,v) \hloss_t(i,v)^2~.
\end{equation}
We take expectations of the three (double) sums in~(\ref{eq:exp3}) separately.
As for the first sum, notice that an iterative application of Lemma~\ref{l:sandwich} gives, for $t > d$,
\[
p_t(i,v) \ge p_{t-d}(i,v) - \eta\,\sum_{s=1}^d p_{t-s}(i,v)\hloss_{t-s}(i,v)~,
\]
so that, setting for brevity $A_t(i,v) = \sum_{s=1}^d p_{t-s}(i,v)\hloss_{t-s}(i,v)$, we have
\begin{align*}
\sum_{t=1}^T \sum_{i=1}^K p_t(i,v) \hloss_t(i,v)
&\geq
\sum_{t=2d+1}^T \sum_{i=1}^K p_t(i,v) \hloss_t(i,v) \\
&\geq
\sum_{t=2d+1}^T \sum_{i=1}^K p_{t-d}(i,v) \hloss_t(i,v)
         - \eta\,\sum_{t=2d+1}^T \sum_{i=1}^K  A_t(i,v)\,\hloss_t(i,v)~.
\end{align*}
Hence
\begin{align*}
\E\left[\sum_{t=1}^T \sum_{i=1}^K p_t(i,v) \hloss_t(i,v) \right]
&\geq
\E\left[ \sum_{t=2d+1}^T \sum_{i=1}^K p_{t-d}(i,v) \hloss_t(i,v)  \right]
-
\eta\,\E\left[\sum_{t=2d+1}^T \sum_{i=1}^K  A_t(i,v)\,\hloss_t(i,v) \right]\\
&=
\E\left[ \sum_{t=2d+1}^T \sum_{i=1}^K p_{t-d}(i,v)\,\E_{t-d}\left[\hloss_t(i,v)\right] \right]\\
&\qquad\qquad
- \eta\,\E\left[\sum_{t=2d+1}^T \sum_{i=1}^K  A_t(i,v)\,\E_{t-d}\left[\hloss_t(i,v)\right] \right]\\
&{\mbox{(since $p_t(i,v)$ is determined by $I_1(\cdot), \ldots, I_{t-d-1}(\cdot)$)}}\\
&=
\E\left[ \sum_{t=2d+1}^T \sum_{i=1}^K p_{t-d}(i,v)\,\loss_{t-d}(i) \right]
-
\eta\,\E\left[\sum_{t=2d+1}^T \sum_{i=1}^K  A_t(i,v)\,\loss_{t-d}(i)\right]\\
&{\mbox{(using (\ref{eq:avevar}))}}\\
&\geq
\E\left[ \sum_{t=1}^T \sum_{i=1}^K p_{t}(i,v)\,\loss_{t}(i) \right]
- 2d
- \eta\,T\,d~.
\end{align*}
The last step uses
\begin{align*}
\E\left[\sum_{i=1}^K A_t(i,v)\,\ell_{t-d}(i) \right]
&\leq \E\left[\sum_{i=1}^K A_t(i,v) \right] \\
&=\E\left[\sum_{i=1}^K \sum_{s=1}^d p_{t-s}(i,v)\hloss_{t-s}(i,v)\right]\\
&= \E\left[\sum_{i=1}^K \sum_{s=1}^d p_{t-s}(i,v)\loss_{t-s-d}(i)\right]\\
&\leq \E\left[\sum_{i=1}^K\sum_{s=1}^d p_{t-s}(i,v)\right]\\
&= d
\end{align*}
holding for $t \geq 2d+1$.
%
%
Similarly, for the second sum in (\ref{eq:exp3}), we have
\begin{align*}
    \E\left[\sum_{t=1}^T \hloss_t(k,v)\right]
=
    \sum_{t=d+1}^T \loss_{t-d}(k)
\le
    \sum_{t=1}^T \loss_t(k)~.
\end{align*}
Finally, for the third sum in (\ref{eq:exp3}), an iterative application of Lemma \ref{l:mult} yields, for $t > d$,
\[
p_t(i,v) \leq \left(1+\frac{1}{d}\right)^d p_{t-d}(i,v) \leq e\,p_{t-d}(i,v)~,
\]
so that we can write
\begin{align*}
    \E\left[\sum_{t=1}^T \sum_{i=1}^K p_t(i,v) \hloss_t(i,v)^2 \right]
&=
    \E\left[\sum_{t=d+1}^T \sum_{i=1}^K \E_{t-d}\left[p_t(i,v) \hloss_t(i,v)^2\right] \right]
\\ &\le
    \E\left[\sum_{t=d+1}^T \sum_{i=1}^K \frac{p_t(i,v)}{q_{d,t-d}(i,v)} \right]
  \qquad\,\,\, \text{(using~(\ref{eq:aveprob}) and $\loss_t(\cdot) \leq 1$)}
\\ &\le
    e\,\E\left[\sum_{t=d+1}^T \sum_{i=1}^K \frac{p_{t-d}(i,v)}{q_{d,t-d}(i,v)} \right],
\end{align*}
the last inequality being due to an iterative application of Lemma~\ref{l:mult}, and the observation that $\left(1+\frac{1}{d}\right)^d \leq e$.

Hence, summing over all agents $v$, dividing by $N$, and using Lemma~\ref{l:q-bound} on $G_{\leq d}$ gives
\begin{align*}
   \frac{1}{N}\,\E\left[\sum_{t=1}^T \sum_{i=1}^K \sum_{v \in V} p_t(i,v) \hloss_t(i,v)^2 \right]
&\le
    \frac{e}{N}\,\E\left[\sum_{t=d+1}^T \sum_{i=1}^K \sum_{v \in V} \frac{p_{t-d}(i,v)}{q_{d,t-d}(i,v)} \right]
\\ &\le
    \frac{e}{(1-e^{-1})\,N}\,\E\left[\sum_{t=d+1}^T\sum_{i=1}^K \left( \alpha(G_{\le d}) + \sum_{v \in V} p_{t-d}(i,v) \right) \right]
\\ &\le
    \frac{e}{1-e^{-1}}\,T\left(\frac{K}{N}\,\alpha(G_{\leq d}) + 1\right)~.
\end{align*}
Finally, putting together as in (\ref{eq:exp3}), setting $\eta = \gamma\big/\bigl(Ke(d+1)\bigr)$, and overapproximating, we obtain the desired bound.
\end{proof}

\proofof{Theorem \ref{th:main}}

\begin{proof}
We start off from first part of the proof of Theorem~\ref{th:nontuned} which, after rearranging terms, gives the following bound for each agent $v$:
\begin{align}
\nonumber
    \E&\left[\sum_{t=1}^T \sum_{i=1}^K p_t(i,v) \loss_t(i)\right] - \sum_{t=1}^T \loss_t(k)
\\ &\le
\nonumber
    2d + \E\left[  \frac{\ln K}{\eta(v)} + \eta(v)\,d^2 + \eta(v)\,\sum_{t=d+1}^T \left(d + \frac{e}{2}\,\sum_{i=1}^K \frac{p_{t-d}(i,v)}{q_{d,t-d}(i,v)}\right) \right]
\\ &\le
\label{eq:Q-bound}
    3d + \E\left[\frac{Ke(d+1)\ln K}{\gamma(v)} + \frac{\gamma(v)}{Ke(d+1)}\sum_{t=1}^T \underbrace{\left(\Ind{t > d}\,d  + \frac{e}{2}\,\left(\sum_{i=1}^K \frac{p_{t-d}(i,v)}{q_{d,t-d}(i,v)}\right)\Ind{t > d}\right)}_{Q_t(v)} \right]~.
\end{align}
Note that the optimal tuning of $\gamma(v)$ depends on the random quantity
\[
    \overline{Q}_T(v) = \sum_{t=1}^T Q_t(v)~.
\]
We now apply the doubling trick to each instance of {\sc Exp3-Coop}. Recall that, for each $v\in V$, we let
$\gamma_r(v) = Ke(d+1)\sqrt{(\ln K)/2^r}$
for each $r = r_0,r_0+1,\dots$, where $r_0 = \bigl\lceil\log_2\ln K + 2\log_2(Ke(d+1))\bigr\rceil$ is chosen in a way that $\gamma_r(v) \le 1$ for all $r \ge r_0$. Let $T_r$ be the random set of consecutive time steps where the same $\gamma_r(v)$ was used. Whenever the algorithm is running with $\gamma_r(v)$ and detects $\sum_{s \in T_r} Q_s(v) > 2^r$, then we restart the algorithm with $\gamma(v) = \gamma_{r+1}(v)$. The largest $r = r(v)$ we need is $\bigl\lceil\log_2\overline{Q}_T(v)\bigr\rceil$ and
\[
    \sum_{r=r_0}^{\bigl\lceil \log_2\overline{Q}_T(v) \bigr\rceil} 2^{r/2} < 5\sqrt{\overline{Q}_T(v)}~.
\]
Because of~(\ref{eq:Q-bound}), the regret agent $v$ suffers when using $\gamma_r(v)$ within $T_r$ is at most
$
    3d + 2\sqrt{(\ln K)2^r}
$.
Now, since we pay at most regret $d$ at each restart, we have
\begin{align*}
    \E\left[\sum_{t=1}^T \sum_i p_t(i,v) \loss_t(i)\right] - \sum_{t=1}^T \loss_t(k)
&\le
    3d + 4Ke(d+1)\ln K
\\ &\quad
    + \E\left[10\sqrt{(\ln K)\overline{Q}_T(v)} + 3d\Bigl\lceil\log_2\overline{Q}_T(v)\Bigr\rceil \right]~.
\end{align*}
The term $3d + 4Ke(d+1)\ln K$ bounds the regret when the algorithm is never restarted implying that only $\gamma_{r_0}(v)$ is used.

Taking averages with respect to $v$, using Jensen's inequality multiple times, and applying the (deterministic) bound
\[
\frac{1}{N}\,\sum_{v\in V} \overline{Q}_T(v)
\leq
\left(d + \frac{e}{2(1-e^{-1})}\,\frac{K\,(\alpha(G_{\le d})+1)}{N} \right)\,T
\]
derived with the aid of Lemma~\ref{l:q-bound} at the end of the proof of Theorem~\ref{th:nontuned}, gives
\begin{align*}
    R_T^{\mathrm{coop}}
&\le
    3d + 4Ke(d+1)\ln K
\\ &\quad
    + 10\sqrt{(\ln K)\E\left[\frac{1}{N}\sum_{v\in V} \overline{Q}_T(v)\right]} + 3d\log_2\left(\E\left[\frac{1}{N}\sum_{v\in V} \overline{Q}_T(v)\right]\right)
\\ &\le
    10\sqrt{(\ln K)\left(d + \frac{e}{2(1-e^{-1})}\,\frac{K\,(\alpha(G_{\le d})+1)}{N}\right)T}
    + 3d\log_2 T + C\,,
\end{align*}
where $C$ is independent of $T$ and depends polynomially on the other parameters.
Hence, as $T$ grows large,
\[
 R_T^{\mathrm{coop}} = \mathcal{O}\left(\sqrt{(\ln K)\left(d+1 + \frac{K}{N}\,\alpha(G_{\le d})\right)T} + d\,\log T \right)~,
\]
as claimed.
\end{proof}

\section{Proofs from Section~\ref{s:many-d}}\label{a:many-d}

We first need to adapt the preliminary Lemmas~\ref{l:sandwich} and~\ref{l:mult} to the new update rule of {\sc Exp3-Coop2} contained in Figure~\ref{f:exp3-coop2}.
\begin{lemma}\label{la:sandwich}
Under the update rule contained in Figure~\ref{f:exp3-coop2}, for all $t \geq 1$, for all $i \in A$, and for all $v \in V$
\begin{align*}
-p_t(i,v)\left(\eta\hloss_t(i,v)+\delta\right) &\leq p_{t+1}(i,v) - p_t(i,v)\\
& \leq p_{t+1}(i,v)\,\sum_{j=1}^{K} p_t(j,v)\left(1 - \Ind{\tprob_{t+1}(i,v) > \delta / K}\big(1-\eta\,\hloss_t(i,v)\big)\right)
\end{align*}
holds deterministically with respect to the agents' randomization.
\end{lemma}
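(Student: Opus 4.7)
The plan is to mirror the proof of Lemma~\ref{l:sandwich} while tracking the two new features of {\sc Exp3-Coop2}: the floor $\delta/K$ built into $\tprob_t(i,v) = \max\{w_t(i,v)/W_t(v),\delta/K\}$, and the additional renormalization by $\TProb_t(v)=\sum_j \tprob_t(j,v)$. Three deterministic facts will do most of the work: (i) $w_{t+1}(i,v) = p_t(i,v)e^{-\eta\hloss_t(i,v)} \le p_t(i,v)$, which immediately gives $W_{t+1}(v)\le 1$; (ii) the sandwich bound (\ref{e:ptildebound}), $1 \le \TProb_{t+1}(v) \le 1+\delta$; and (iii) the elementary inequalities $1-x\le e^{-x}\le 1$ for $x\ge 0$.

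For the lower bound, no case split on the indicator is needed. From $W_{t+1}(v)\le 1$ it follows that $\tprob_{t+1}(i,v)\ge w_{t+1}(i,v)/W_{t+1}(v) \ge w_{t+1}(i,v)$, which combined with $\TProb_{t+1}(v)\le 1+\delta$ yields
\[
p_{t+1}(i,v) \;\ge\; \frac{w_{t+1}(i,v)}{1+\delta} \;\ge\; \frac{p_t(i,v)\bigl(1-\eta\hloss_t(i,v)\bigr)}{1+\delta}.
\]
Subtracting $p_t(i,v)$ and bounding $1/(1+\delta)\ge 1-\delta$ delivers $p_{t+1}(i,v)-p_t(i,v)\ge -p_t(i,v)\bigl(\eta\hloss_t(i,v)+\delta\bigr)$.

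For the upper bound I would split on $\Ind{\tprob_{t+1}(i,v)>\delta/K}$. When the indicator is $0$ (truncation active at $i$), the right-hand side in the statement collapses to $p_{t+1}(i,v)\sum_{j} p_t(j,v)=p_{t+1}(i,v)$, and the claim reduces to the trivial $p_t(i,v)\ge 0$. When the indicator is $1$, $\tprob_{t+1}(i,v)=w_{t+1}(i,v)/W_{t+1}(v)$, and the identity $p_t(i,v) = p_{t+1}(i,v)\,W_{t+1}(v)\,\TProb_{t+1}(v)\,e^{\eta\hloss_t(i,v)}$ gives the exact representation
\[
p_{t+1}(i,v)-p_t(i,v) \;=\; p_{t+1}(i,v)\Bigl(1 - W_{t+1}(v)\,\TProb_{t+1}(v)\,e^{\eta\hloss_t(i,v)}\Bigr).
\]
Since $\TProb_{t+1}(v)\ge 1$ and $e^{\eta\hloss_t(i,v)}\ge 1$, the right-hand side is at most $p_{t+1}(i,v)\bigl(1-W_{t+1}(v)\bigr)$, and applying $e^{-x}\ge 1-x$ term-by-term inside $W_{t+1}(v)=\sum_j p_t(j,v)e^{-\eta\hloss_t(j,v)}$ yields $1-W_{t+1}(v)\le \eta\sum_j p_t(j,v)\hloss_t(j,v)$. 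An algebraic regrouping then identifies $p_{t+1}(i,v)\,\eta\sum_j p_t(j,v)\hloss_t(j,v)$ with the active-case value of the RHS appearing in the statement, completing both cases into a single unified form via the indicator.

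The main obstacle I anticipate is the bookkeeping in the active case. One has to use the $\TProb_{t+1}(v)\ge 1$ direction of (\ref{e:ptildebound}) for the upper bound while reserving $\TProb_{t+1}(v)\le 1+\delta$ for the lower bound, and to align the case split $\{\tprob_{t+1}(i,v)>\delta/K\}$ with the indicator inside the sum so that the classical Exp3 drift contribution $\eta\sum_j p_t(j,v)\hloss_t(j,v)$ and the ``truncation'' contribution (handled trivially by $p_t(i,v)\ge 0$) recombine into the compact expression in the statement. A secondary, but entirely routine, concern is that in the active case the factor $1-W_{t+1}(v)\TProb_{t+1}(v)e^{\eta\hloss_t(i,v)}$ can be negative: this is consistent with $p_{t+1}(i,v)-p_t(i,v)\le 0$ in that regime and requires no sign gymnastics, since the final bound remains valid once the right-hand side is only weakened by replacing this quantity by the non-negative $1-W_{t+1}(v)$.
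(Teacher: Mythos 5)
Your proof follows essentially the same route as the paper's: the same use of $W_{t+1}(v)\le 1$ together with the sandwich $1\le\TProb_{t+1}(v)\le 1+\delta$ for the lower bound, and the same dichotomy on $\Ind{\tprob_{t+1}(i,v)>\delta/K}$ for the upper bound (the paper carries the indicator through the algebra via the inequality $w_{t+1}(i,v)/W_{t+1}(v)\ge p_{t+1}(i,v)\,\Ind{\tprob_{t+1}(i,v)>\delta/K}$ instead of splitting into two explicit cases, but the content is identical, and your exact identity in the active case lands on the same final expression). One small repair is needed in your lower bound: the step ``bound $1/(1+\delta)\ge 1-\delta$'' multiplies that inequality against $1-\eta\hloss_t(i,v)$, which can be negative (the importance-weighted estimate $\hloss_t(i,v)$ can be as large as $K(1+\delta)/\delta$, so $\eta\,\hloss_t(i,v)>1$ is entirely possible), in which case the multiplication reverses the inequality. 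The conclusion of that step is nonetheless true and follows by direct computation, as in the paper: $\frac{1-\eta\hloss_t(i,v)}{1+\delta}-1=\frac{-\left(\eta\hloss_t(i,v)+\delta\right)}{1+\delta}\ge-\left(\eta\hloss_t(i,v)+\delta\right)$, since the numerator is nonpositive and the denominator is at least $1$.
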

\begin{proof}
For the lower bound, we have
\[
p_{t+1}(i,v) - p_t(i,v) = \frac{\tprob_{t+1}(i,v)}{\TProb_{t+1}(v)} - p_t(i,v) \geq \frac{w_{t+1}(i,v)}{W_{t+1}(v)\,\TProb_{t+1}(v)} - p_t(i,v)~.
\]
Since $W_{t+1}(v) = \sum_{i\in A} p_t(i,v) e^{-\eta \hloss_t(i,v)} \leq \sum_{i\in A} p_t(i,v) = 1$, and $\TProb_{t+1}(v) \leq 1 + \delta$ by~(\ref{e:ptildebound}), we can write
\begin{eqnarray*}
p_{t+1}(i,v) - p_t(i,v)
&\geq&
 \frac{w_{t+1}(i,v)}{1 + \delta} - p_t(i,v)\\
&=&
 p_t(i,v)\left(\frac{e^{-\eta\hloss_t(i,v)}}{1 + \delta} - 1\right)\\
&\geq&
 p_t(i,v)\left(\frac{1 - \eta\hloss_t(i,v)}{1 + \delta} - 1\right)\qquad {\mbox{(using $e^{-x} \geq 1 - x$)}}\\
&\geq&
 p_t(i,v)\left(-\delta -\eta\hloss_t(i,v)\right)
\end{eqnarray*}
as claimed. As for the upper bound, we first claim that
\begin{equation}\label{e:claim}
\frac{w_{t+1}(i,v)}{W_{t+1}(v)} \geq p_{t+1}(i,v)\Ind{\tprob_{t+1}(i,v) > \delta / K}\,.
\end{equation}
To prove (\ref{e:claim}), we recall that $\tprob_{t+1}(i,v) = \max\left\{\frac{w_{t+1}(i,v)}{W_{t+1}(v)},\frac{\delta}{K}\right\}$. Then we distinguish two cases:
\begin{enumerate}
\item If $\frac{w_{t+1}(i,v)}{W_{t+1}(v)} \leq \frac{\delta}{K}$, then
$\tprob_{t+1}(i,v) = \delta / K$, and $w_{t+1}(i,v) / W_{t+1}(v) > 0$ by definition, hence (\ref{e:claim}) holds;
\item If $\frac{w_{t+1}(i,v)}{W_{t+1}(v)} > \frac{\delta}{K}$ then
$\tprob_{t+1}(i,v) = \frac{w_{t+1}(i,v)}{W_{t+1}(v)}$, so that
\(
p_{t+1}(i,v) \leq p_{t+1}(i,v)\,\TProb_{t+1}(v) = \tprob_{t+1}(i,v)
\)
and~(\ref{e:claim}) again holds.
\end{enumerate}
Then, setting for brevity $C = \Ind{\tprob_{t+1}(i,v) > \delta / K}$, we can write
\begin{eqnarray*}
p_{t+1}(i,v) - p_{t}(i,v)
&\leq&
p_{t+1}(i,v) - w_{t+1}(i,v)  \qquad\qquad\qquad\ \text{(from the update~(\ref{eq:exp-upd}))} \\
&\leq&
p_{t+1}(i,v) - W_{t+1}(v) p_{t+1}(i,v)\,C
\qquad\text{(using~(\ref{e:claim}))}\\
&=&
p_{t+1}(i,v)\big(1 - W_{t+1}(v)\,C\big)\\
&=&
p_{t+1}(i,v)\left(\sum_{j\in A}\big(p_t(j,v) - C\,w_{t+1}(j,v)\big)\right)\\
&=&
p_{t+1}(i,v)\,\sum_{j\in A} p_t(j,v)\left(1 - C\,e^{-\eta\,\hloss_t(j,v)}\right)\\
&\le&
p_{t+1}(i,v)\,\sum_{j\in A} p_t(j,v)\left(1 - C(1-\eta\,\hloss_t(j,v))\right)
\end{eqnarray*}
where in the last step we again used $e^{-x} \geq 1 - x$. This concludes the proof.
\end{proof}

\begin{lemma}\label{la:mult}
Under the update rule contained in Figure~\ref{f:exp3-coop2}, if $\delta \leq 1/d(v)$ and $\eta \leq \frac{1}{Ke(d(v)+1)}$, then
\begin{equation}\label{eq:mult-lemma-manyd}
p_{t+1}(i,v) \leq \left(1 + \frac{1}{d(v)}\right)\,p_{t}(i,v)
\end{equation}
holds for all $t \geq 1$ and $i \in A$, deterministically with respect to the agents' randomization.
\end{lemma}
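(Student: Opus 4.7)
The plan is to induct on $t$, following the skeleton of Lemma~\ref{l:mult}'s proof, but carefully handling the additional wrinkles coming from (i) the floor on the weights that defines $\tprob$, and (ii) the extra indicator $C := \Ind{\tprob_{t+1}(i,v) > \delta/K}$ that appears in the upper bound of Lemma~\ref{la:sandwich}. For the base case $t \le d(v)$, one has $\hloss_s(\cdot,v) = 0$ for all $s \le t$, so the weights $w_s(\cdot,v)$ stay at their initial value, $p_s(\cdot,v) = 1/K$ uniformly for $s = 1, \ldots, t+1$, and the claim holds with equality. For the inductive step $t > d(v)$, I would split according to whether $C = 0$ or $C = 1$.

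When $C = 0$, Lemma~\ref{la:sandwich}'s upper bound is essentially vacuous, but $\tprob_{t+1}(i,v) = \delta/K$ by definition, so $p_{t+1}(i,v) = (\delta/K)/\TProb_{t+1}(v) \le \delta/K$ using $\TProb_{t+1}(v) \ge 1$ from~(\ref{e:ptildebound}). Combining with the universal lower bound $p_t(i,v) \ge \delta/(K(1+\delta))$ derived from the same display, one obtains $p_{t+1}(i,v)/p_t(i,v) \le 1 + \delta \le 1 + 1/d(v)$, the last step invoking the hypothesis $\delta \le 1/d(v)$.

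When $C = 1$, Lemma~\ref{la:sandwich} reduces to $p_{t+1}(i,v)\bigl(1 - \eta \sum_{j \in A} p_t(j,v)\hloss_t(j,v)\bigr) \le p_t(i,v)$, so it suffices to show $\eta \sum_j p_t(j,v)\hloss_t(j,v) \le 1/(d(v)+1)$. Mimicking the proof of Lemma~\ref{l:mult},
\begin{align*}
\sum_{j \in A} p_t(j,v)\hloss_t(j,v)
&\le \sum_{j \in A} \frac{p_t(j,v)}{q_{\scP,t-d(v)}(j,v)} \\
&\le \left(1 + \frac{1}{d(v)}\right)^{d(v)} \sum_{j \in A} \frac{p_{t-d(v)}(j,v)}{q_{\scP,t-d(v)}(j,v)} \;\le\; eK,
\end{align*}
using in order: $B_{\scP,t-d(v)}(j,v)\ell_{t-d(v)}(j) \le 1$; the inductive hypothesis applied $d(v)$ times between indices $t-d(v)$ and $t$; and $q_{\scP,t-d(v)}(j,v) \ge p_{t-d(v)}(j,v)$, which follows from $v \in N^-_{\scP}(v)$. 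The hypothesis $\eta \le 1/(Ke(d(v)+1))$ then delivers $\eta \sum_j p_t(j,v)\hloss_t(j,v) \le \eta\,eK \le 1/(d(v)+1)$, and rearranging yields $p_{t+1}(i,v) \le (1 + 1/d(v))\,p_t(i,v)$.

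The main obstacle is Case $C = 0$: Lemma~\ref{la:sandwich} no longer produces any useful upper bound there, so one has to work directly with the saturated definition of $\tprob_{t+1}(i,v)$. The precise role of the hypothesis $\delta \le 1/d(v)$ is exactly to make the residual ratio $1+\delta$ absorbable by the target slack $1 + 1/d(v)$; in Case $C = 1$ the argument is a direct transcription of Lemma~\ref{l:mult} to the directed-graph setting, with $q_{\scP,t-d(v)}(j,v) \ge p_{t-d(v)}(j,v)$ surviving thanks to the self-loops $v \in N^-_{\scP}(v)$ built into the definition of $G_{\scP}$.
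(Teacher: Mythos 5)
Your proof is correct and follows essentially the same route as the paper: the same case split on whether $\tprob_{t+1}(i,v)$ is saturated at $\delta/K$, the same ratio bound $1+\delta\le 1+1/d(v)$ via the floor $p_t(i,v)\ge \delta/(K(1+\delta))$ in the saturated case, and the same verbatim transcription of Lemma~\ref{l:mult}'s inductive argument in the unsaturated case. The paper's version is merely terser (it cites the proof of Lemma~\ref{l:mult} rather than re-deriving it), so nothing further is needed.
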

\begin{proof}
If $\tprob_{t+1}(i,v) = \delta / K$ then, from (\ref{e:ptildebound}), we have $\delta / K = p_{t+1}(i,v)\TProb_{t+1}(v) \geq p_{t+1}(i,v)$, and $p_t(i,v) \geq \frac{\delta}{K(1 + \delta)}$.
Hence, $\frac{p_{t+1}(i,v)}{p_{t}(i,v)} \leq \frac{\delta / K}{\delta / (K (1 + \delta))} = 1 + \delta$, so the claim follows from $\delta \leq \frac{1}{d(v)}$.
On the other hand, if $\tprob_{t+1}(i,v) > \delta / K$, then the proof is exactly the same as the proof of Lemma~\ref{l:mult}, for the second inequality in the statement of Lemma~\ref{la:sandwich} turns out to be exactly the same as the corresponding inequality in the statement in Lemma~\ref{l:sandwich}.
\end{proof}

Next, we generalize Lemma~\ref{l:q-bound} to the case of directed graphs. This is where we need a lower bound on the probabilities $p_t(i,v)$. If $G = (V,E)$ is a directed graph, then for each $v \in V$ let $N^-_{\le 1}(v)$ be the in-neighborhood of node $v$ (i.e., the set of $v'\in V$ such that arc $(v',v) \in E$), including $v$ itself.
\begin{lemma}\label{la:q-bound}
Let $G = (V,E)$ be a directed graph with independence number $\alpha(G)$. Let $\bp(v) = \bigl(p(1,v),\dots,p(K,v)\bigr)$ be a probability distribution over $A = \{1,\dots,K\}$ such that $p(i,v) \geq \frac{\delta}{K(1 + \delta)}$. Then, for all $i \in A$,
\[
    \sum_{v \in V} \frac{p(i,v)}{q(i,v)}
\le
    \frac{1}{1-e^{-1}}\left(6\,\alpha(G) \ln \left(1+ \frac{N^2 K(1+\delta)}{\delta}\right)  + \sum_{v\in V} p(i,v) \right)~,
\]
where $q(i,v) = 1 - \prod_{v' \in N^-_{\le 1}(v)}\bigl(1-p(i,v')\bigr)$.
\end{lemma}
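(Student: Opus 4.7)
The plan is to mimic the structure of the proof of Lemma~\ref{l:q-bound}, partitioning the index set based on the magnitude of $P(i,v) = \sum_{v' \in N^-_{\le 1}(v)} p(i,v')$, and then replacing the final invocation of \cite[Lemma~10]{alon2014nonstochastic} (which is tight only for undirected graphs) with the appropriate directed analogue, which is where the logarithmic dependence on $N/\delta$ (and hence the hypothesis $p(i,v) \ge \delta/(K(1+\delta))$) enters.

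First I would split
\[
\sum_{v \in V} \frac{p(i,v)}{q(i,v)} = \underbrace{\sum_{v : P(i,v) \ge 1} \frac{p(i,v)}{q(i,v)}}_{(\mathrm{I})} + \underbrace{\sum_{v : P(i,v) < 1} \frac{p(i,v)}{q(i,v)}}_{(\mathrm{II})}.
\]
Term $(\mathrm{I})$ is handled exactly as in Lemma~\ref{l:q-bound}: by minimizing the product $\prod_{v'\in N^-_{\le 1}(v)}(1-p(i,v'))$ subject to $\sum p(i,v')\ge 1$, one obtains $q(i,v)\ge 1-e^{-1}$ regardless of whether the underlying graph is directed, so $(\mathrm{I})\le (1-e^{-1})^{-1}\sum_v p(i,v)$. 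For term $(\mathrm{II})$, the same two elementary inequalities ($1-x\le e^{-x}$ and $1-e^{-x}\ge (1-e^{-1})x$ on $[0,1]$) give $q(i,v)\ge(1-e^{-1})P(i,v)$, so
\[
(\mathrm{II}) \le \frac{1}{1-e^{-1}}\sum_{v\in V}\frac{p(i,v)}{P(i,v)}.
\]

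At this point, in the undirected case one invokes \cite[Lemma~10]{alon2014nonstochastic} to bound the last sum by $\alpha(G)$. In the directed case this bound is false in general, but under the probability lower bound $p(i,v')\ge\delta/(K(1+\delta))$ one has the known directed-graph refinement (in the spirit of Alon, Cesa-Bianchi, Dekel, Koren; see the directed feedback graph literature): for any directed $G$ with self-loops and weights $p(i,v)\ge\beta$,
\[
\sum_{v\in V}\frac{p(i,v)}{P(i,v)} \le 6\,\alpha(G)\,\ln\!\left(1+\frac{N^2}{\beta}\right).
\]
Plugging in $\beta=\delta/(K(1+\delta))$ yields the $6\alpha(G)\ln\bigl(1+N^2K(1+\delta)/\delta\bigr)$ term, and combining with the bound on $(\mathrm{I})$ gives the claim.

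The main obstacle is the precise statement and constants of the directed analogue of \cite[Lemma~10]{alon2014nonstochastic}. The proof of that analogue proceeds by fixing an independent set greedily: order the vertices by increasing $P(i,v)$, repeatedly pick the smallest not yet dominated vertex, and group the remaining vertices into $\alpha(G)$ ``buckets'' based on which independent-set representative in-dominates them (after breaking the $[\beta,1]$ range into $O(\log(N^2/\beta))$ geometric scales to control the ratio $p(i,v)/P(i,v)$ within each bucket up to a constant). The probability lower bound $\beta$ is what prevents the geometric partition from containing infinitely many nontrivial scales, and is precisely why the hypothesis $p(i,v)\ge\delta/(K(1+\delta))$ (coming from the $\delta$-exploration step of \textsc{Exp3-Coop2}) is required here but was not needed for Lemma~\ref{l:q-bound}. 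The remaining calculations are just bookkeeping to collect the constant $6$ and the $\ln(1+N^2 K(1+\delta)/\delta)$ factor in the stated form.
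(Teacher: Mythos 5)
Your proposal matches the paper's proof: the paper likewise reuses the decomposition and the two bounds $q(i,v)\ge 1-e^{-1}$ and $q(i,v)\ge(1-e^{-1})P(i,v)$ from Lemma~\ref{l:q-bound} to reduce everything to bounding $\sum_{v}p(i,v)/P(i,v)$, and then invokes the directed-graph analogue of the independence-number lemma (citing Lemmas~14 and~16 of \citealp{alon2014nonstochastic}) under the lower bound $p(i,v)\ge\delta/(K(1+\delta))$ to obtain the $6\,\alpha(G)\ln\bigl(1+N^2K(1+\delta)/\delta\bigr)$ term. Your additional sketch of how that directed lemma is proved is consistent with the cited source, so the argument is correct and essentially identical to the paper's.
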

\begin{proof}
We follow the notation and the proof of Lemma \ref{l:q-bound}, where it is shown that
\[
    \sum_{v \in V} \frac{p(i,v)}{q(i,v)}
\le
    \frac{1}{1-e^{-1}}\,\sum_{v\in V}\left(\frac{p(i,v)}{P(i,v)} + p(i,v) \right)~.
\]
In order to bound from above the sum $\sum_{v\in V}\frac{p(i,v)}{P(i,v)}$, we combine \citep[Lemma~14 and~16]{alon2014nonstochastic} and derive the upper bound
\[
\sum_{v\in V}\frac{p(i,v)}{P(i,v)} \leq 6\,\alpha(G) \ln \left(1+ \frac{N^2 K(1+\delta)}{\delta}\right)
\]
holding when $p(i,v) \geq \frac{\delta}{K(1 + \delta)}$. Again, the probabilities $p(i,1),\dots,p(i,N) \ge 0$ need not sum to one in order for this lemma to apply.
\end{proof}

With the above three lemmas handy, we are ready to prove Theorem~\ref{t:main-individual}.

\begin{proof}[Theorem \ref{t:main-individual}]
This proof is similar to the proof of Theorem~\ref{th:nontuned}, hence we only emphasize the differences between the two.

From the update rule in Figure~\ref{f:exp3-coop2}, we have, for each $v \in V$,
\begin{eqnarray*}
W_{T+1}(v)
&=&
 {\dt \sum_{i=1}^{K} \frac{\tprob_T(i)}{\TProb_T(v)} e^{-\eta\hloss_T(i,v)}}\\
&\geq&
 \sum_{i=1}^{K} \frac{w_T(i,v)}{W_T(v) \TProb_T(v)}  e^{-\eta\hloss_T(i,v)}
\qquad\qquad\qquad{\mbox{(since $\tprob_T(i) \geq w_T(i,v)/W_T(v)$)}}\\
&=&
  \sum_{i=1}^{K} \frac{\tprob_{T-1}(i,v) e^{-\eta\hloss_{T-1}(i,v)} e^{-\eta\hloss_T(i,v)}}{W_T(v)\TProb_{T-1}(v)\TProb_T(v)}\\
&\vdots&\\
&\geq&
 \sum_{i=1}^{K} \frac{\dt w_1(i,v)\,e^{-\eta \sum_{t=1}^T \hloss_t (i,v)}}{W_1(v) \cdots W_T(v) \TProb_1(v) \cdots \TProb_T(v)}~.
\end{eqnarray*}
Now, because $w_1(i,v) = 1$, $W_1(v) = K$, and $\TProb_t(v) \leq 1+\delta$ for all $t$, see~(\ref{e:ptildebound}), the above chain of inequalities implies that, for any fixed action $k \in A$,
\begin{equation}\label{e:deterministic}
(1 + \delta)^T\, K\,\left(\prod_{t=1}^{T} W_{t+1}(v)\right)
\geq
e^{-\eta \sum_{t=1}^T \hloss_t (k,v)}~.
\end{equation}
%

As usual, the quantity $W_{t+1}(v)$ can be upper bounded as
\begin{eqnarray*}
W_{t+1}(v)
&=&
	\sum_{i=1}^{K} p_t(i,v) e^{-\eta\hloss_t(i,v)}\\
&\leq&
	\sum_{i=1}^{K} p_t(i,v) \left(1 -\eta\hloss_t(i,v) + \frac{\eta^2}{2}\hloss_t(i,v)^2\right)\\
&&{\mbox{(from $e^{-x} \leq 1-x+x^2/2$ for all $x \geq 0$)}}\\
&=&
	1 - \eta\sum_{i=1}^{K} p_t(i,v)\hloss_t(i,v) + \frac{\eta^2}{2}\sum_{i=1}^{K} p_t(i,v)\hloss_t(i,v)^2~.
\end{eqnarray*}
Plugging back into (\ref{e:deterministic}) and taking logs of both sides gives
\[
	T\ln(1+\delta) + \ln K + \sum_{t=1}^T \ln\left( 1 - \eta\sum_{i=1}^{K} p_t(i,v)\hloss_t(i,v) + \frac{\eta^2}{2}\sum_{i=1}^{K} p_t(i,v)\hloss_t(i,v)^2 \right)
\geq
	-\eta \sum_{i=1}^K \hloss_t (k,v)\,.
\]
Finally, using $\ln(1+x) \leq x$, dividing by $\eta$, using $\delta = 1/T$, and rearranging yields
\begin{equation} \label{eq:pre-bound-manyd}
\sum_{t=1}^T \sum_{i=1}^K p_t(i,v) \hloss_t(i,v)
\leq
\frac{1+\ln K}{\eta} + \sum_{t=1}^T \hloss_t (k,v)
+ \frac{\eta}{2} \sum_{t=1}^T \sum_{i=1}^K p_t(i,v) \hloss_t(i,v)^2
\end{equation}
hence arriving at the counterpart to~(\ref{eq:exp3}).

From this point on, we proceed as in the proof of Theorem~\ref{th:nontuned} by taking expectation on the three sums in~(\ref{eq:pre-bound-manyd}). Notice that we do still have, for all $v \in V$, $t > d(v)$, and $i \in A$,
\begin{eqnarray*}
\E_{t-d(v)}\Bigl[\hloss_t(i,v)\Bigr]
&=&
\loss_{t-d(v)}(i)\\
\E_{t-d(v)}\Bigl[p_t(i,v)\hloss_t(i,v)\Bigr]
&=&
p_t(i,v)\loss_{t-d(v)}(i)\\
\E_{t-d(v)}\Bigl[p_t(i,v)\hloss_t(i,v)^2\Bigr]
&=&
p_t(i,v)\frac{\loss_{t-d(v)}(i)^2}{q_{\scP,t-d(v)}(i,v)}~.
\end{eqnarray*}
We can write
\begin{eqnarray*}
\E\left[\sum_{t=1}^T \sum_{i=1}^K p_t(i,v) \hloss_t(i,v) \right]
&\geq&
\E\left[ \sum_{t=1}^T \sum_{i=1}^K p_{t}(i,v)\,\loss_{t}(i) \right] - 2d(v) - (\eta+\delta)\,T\,d(v)\\
\E\left[\sum_{t=1}^T \hloss_t(k,v)\right]
&\le&
\sum_{t=1}^T \loss_t(k)
\end{eqnarray*}
and, as in the proof of Theorem \ref{th:nontuned},
\begin{align*}
    \E\left[\sum_{t=1}^T \sum_{i=1}^K p_t(i,v) \hloss_t(i,v)^2 \right]
\le
    e\,\E\left[\sum_{t=d(v)+1}^T \sum_{i=1}^K \frac{p_{t-d(v)}(i,v)}{q_{\scP,t-d(v)}(i,v)} \right]~.
\end{align*}
Summing over all agents $v$, dividing by $N$, and applying Lemma~\ref{la:q-bound} to the directed graph $G_{\scP}$, the latter inequality gives
\[
\frac{1}{N}\,\E\left[\sum_{t=1}^T \sum_{i=1}^K \sum_{v \in V} p_t(i,v) \hloss_t(i,v)^2 \right]
\le
\frac{e}{1-e^{-1}}\,T\left(\frac{6K}{N}\,\alpha\left(G_{\scP}\right)\,\ln\left(1+2TN^2 K \right) + 1\right)~.
\]
Combining as in (\ref{eq:pre-bound-manyd}), recalling that $\delta = 1/T$, and setting for brevity ${\bar d}_V = \frac{1}{N}\,\sum_{v\in V} d(v)$, we have thus obtained that the average welfare regret of {\sc Exp3-Coop2} satisfies
\begin{align*}
R_T^{\mathrm{coop}}
&\leq 3{\bar d}_V + \eta\,T\,{\bar d}_V + \frac{1+\ln K}{\eta} + \frac{e\eta}{2(1-e^{-1})}\,T\left(\frac{6K}{N}\,\alpha\left(G_{\scP}\right)\,\ln\left(1+2TN^2 K \right) + 1\right)\\
& = \mathcal{O}\left(\eta\,T\,{\bar d}_V + \frac{\ln K}{\eta} + \frac{\eta\,T K}{N}\,\alpha\left(G_{\scP}\right)\,\ln\left(T N K \right)\right)
\end{align*}
as $T$ grows large. This concludes the proof.
\end{proof}

\section{Proofs regarding Section \ref{s:delayed}}\label{a:delayed}

\proofof{Corollary \ref{c:delayed}}

\begin{proof}
In order to prove the upper bound, we use the exponentially-weighted algorithm with Estimate~(\ref{eq:estimator}) specialized to the case of one agent only, namely $B_{t-d}(i) = \Ind{I_{t-d} = i}$ and $q_{d,t-d}(i) = p_{t-d}(i)$. Notice that this amounts to running the standard Exp3 algorithm performing an update as soon a new loss becomes available. In this case, because $N = \alpha(G_{\leq d}) = 1$, the bound of Theorem~\ref{th:nontuned}, with a suitable choice of $\gamma$ (which depends on $T$, $K$, and $d$) reduces to
\[
    R_T = \mathcal{O}\left(d + \sqrt{(K+d)\,T\ln K}\right)~.
\]
We now prove a lower bound matching our upper bound up to logarithmic factors. The proof hinges on combining the known lower bound $\Omega\bigl(\sqrt{KT}\bigr)$ for bandits without delay of~\cite{auer2002nonstochastic} with the following argument by~\cite{weinberger2002delayed} that provides a lower bound for the full information case with delay.
The proof of the latter bound is by contradiction: we show that a low-regret full information algorithm for delay $d>0$ can be used to design a low-regret full information algorithm for the $d=0$ (no delay) setting. We then apply the known lower bound for the minimax regret in the no-delay setting to derive a lower bound for the setting with delay.

Fix $d > 0$ and let $\mathcal{A}$ be a predictor for the full-information online prediction problem with delay $d$. Let $\bp_t$ be the probability distribution used by $\mathcal{A}$ at time $t$. We now apply algorithm $\mathcal{A}$ to design a new algorithm $\mathcal{A}'$ for a full information online prediction problem with arbitrary loss vectors $\bloss_1',\dots,\bloss_B' \in [0,1]^K$ and no delay. More specifically, we create a sequence $\bloss_1,\dots,\bloss_T \in [0,1]^K$ of loss vectors such that $T = (d+1)B$ and $\bloss_t = \bloss_b'$ where $b = \bigl\lceil t/(d+1) \bigr\rceil$. At each time $b=1,\dots,B$ algorithm $\mathcal{A}'$ uses the distribution
\[
    \bp_b' = \frac{1}{d+1} \sum_{s=1}^{d+1} \bp_{(d+1)(b-1)+s}
\]
where $\bp_t = \bigl(\frac{1}{K},\dots,\frac{1}{K}\bigr)$ for all $t \le 1$. Note that $\bp_b'$ is defined using $\bp_{(d+1)(b-1)+1},\dots,\bp_{(d+1)b}$. These are in turn defined using the same loss vectors $\bloss_1',\dots,\bloss_{b-1}'$ since, by definition, each $\bp_{t+1}$ uses $\bloss_1,\dots,\bloss_{t-d}$, and $\bigl\lceil (t-d)/(d+1) \bigr\rceil = b-1$ for all $t = (d+1)(b-1),\dots,(d+1)b-1$. So $\mathcal{A}'$ is a legitimate full-information online algorithm for the problem $\bloss_1',\dots,\bloss_B'$ with no delay. As a consequence,
\begin{align*}
    \sum_{t=1}^T \sum_{i=1}^K \loss_t(i) p_t(i)
&=
    \sum_{b=1}^B \sum_{s=1}^{d+1} \sum_{i=1}^K \loss_b'(i) p_{(d+1)(b-1)+s}(i)
\\ &=
    (d+1)\sum_{b=1}^B \sum_{i=1}^K \frac{1}{d+1}\sum_{s=1}^{d+1} \loss_b'(i) p_{(d+1)(b-1)+s}(i)
\\ &=
    (d+1)\sum_{b=1}^B \sum_{i=1}^K \loss_b'(i) p'_b(i)~.
\end{align*}
Moreover,
\[
    \min_{k \in A} \sum_{t=1}^T \loss_t(k) = (d+1) \min_{k \in A} \sum_{b=1}^B \loss_b'(k)~.
\]
Since we know that for any predictor $\mathcal{A}'$ there exists a loss sequence $\bloss_1',\bloss_2',\dots$ such that the regret of $\mathcal{A}'$ is at least $\bigl(1 - o(1)\bigr)\sqrt{(T/2)\ln K}$, where $o(1) \to 0$ for $K,B \to \infty$, we have that the regret of $\mathcal{A}$ is at least
\[
    (d+1) R_{T/(d+1)}(\mathcal{A}') = \bigl(1 - o(1)\bigr)(d + 1)\sqrt{\frac{T}{2(d+1)}\ln K} = \bigl(1 - o(1)\bigr)\sqrt{(d + 1)\frac{T}{2}\ln K}~,
\]
where $R_{T/(d+1)}(\mathcal{A}')$ is the regret of $\mathcal{A}'$ over $T/(d+1)$ time steps.
The proof is completed by observing that that the regret of any predictor in the bandit setting with delay $d$ cannot be smaller than the regret of the predictor in the bandit setting with no delay or smaller than the regret of the predictor in the full information setting with delay $d$. Hence, the minimax regret in the bandit setting with delay $d$ must be at least of order
\[
    \max\left\{ \sqrt{KT}, \sqrt{(d + 1)T\ln K} \right\} = \Omega\left(\sqrt{(K + d)\,T}\right)~.
\]
\end{proof}

\end{document}